\newcommand{\x}{\boldsymbol{x}}
\newcommand{\A}{\mathcal{A}}
\newtheorem{Def}{Definition}
\newtheorem{prop}{Proposition}
\newtheorem{lemma}{Lemma}
\newtheorem{thm}{Theorm}
\title{Bi-Classifier Determinacy Maximization for Unsupervised Domain Adaptation}
\author {Shuang Li,\textsuperscript{\rm 1}
Fangrui Lv,\textsuperscript{\rm 1}
Binhui Xie,\textsuperscript{\rm 1}
Chi Harold Liu,\textsuperscript{\rm 1,\thanks{Corresponding author.}}
Jian Liang,\textsuperscript{\rm 2}
Chen Qin\textsuperscript{\rm 3}\\
}
\begin{document}
\maketitle

\begin{abstract}   
Unsupervised domain adaptation challenges the problem of transferring knowledge from a well-labelled source domain to an unlabelled target domain. Recently, adversarial learning with bi-classifier has been proven effective in pushing cross-domain distributions close. Prior approaches typically leverage the disagreement between bi-classifier to learn transferable representations, however, they often neglect the classifier determinacy in the target domain, which could result in a lack of feature discriminability. In this paper, we present a simple yet effective method, namely \textit{Bi-Classifier Determinacy Maximization} (BCDM), to tackle this problem. Motivated by the observation that target samples cannot always be separated distinctly by the decision boundary, here in the proposed BCDM, we design a novel classifier determinacy disparity (CDD) metric, which formulates classifier discrepancy as the class relevance of distinct target predictions and implicitly introduces constraint on the target feature discriminability. To this end, the BCDM can generate discriminative representations by encouraging target predictive outputs to be consistent and determined, meanwhile, preserve the diversity of predictions in an adversarial manner. Furthermore, the properties of CDD as well as the theoretical guarantees of BCDM's generalization bound are both elaborated. Extensive experiments show that BCDM compares favorably against the existing state-of-the-art domain adaptation methods. 
\end{abstract}
    
\section{Introduction}

The significant progress on computer vision tasks has been dominated by deep neural networks (DNNs) trained on large-scale datasets, such as image classification~\cite{alexnet}, semantic segmentation~\cite{chen2018deeplab} and object detection~\cite{ren2015faster}. Despite their tremendous success, collecting sufficient labelled data in real-world applications is always onerous and extremely costly. In this regard, it is highly desirable to develop unsupervised domain adaptation (UDA) techniques~\cite{survey,TCA,GFK} to address the domain shift problem existing between unlabelled target domain and labelled source domain.

Latest advances of deep UDA methods, leveraging the extraordinary feature extraction ability of DNNs, have been continuously pushing forward the boundaries of UDA~\cite{BNM,MCC,BSP,AFN,DCAN}. To mitigate the domain shift, some UDA approaches aim at minimizing the well-defined distance metrics between domains~\cite{DDC,DAN}. Works along this line are based on the theoretical analysis in~\cite{A-distance} that minimizes the divergence across source and target domains. On par with them, fruitful line of works utilizing generative adversarial nets (GANs)~\cite{GAN} have also achieved remarkable performance by learning domain-invariant representations adversarially in a two player min-max game~\cite{DANN,MCD}.

\begin{figure*}[!htbp]
  \centering
  \includegraphics[width=0.98\textwidth]{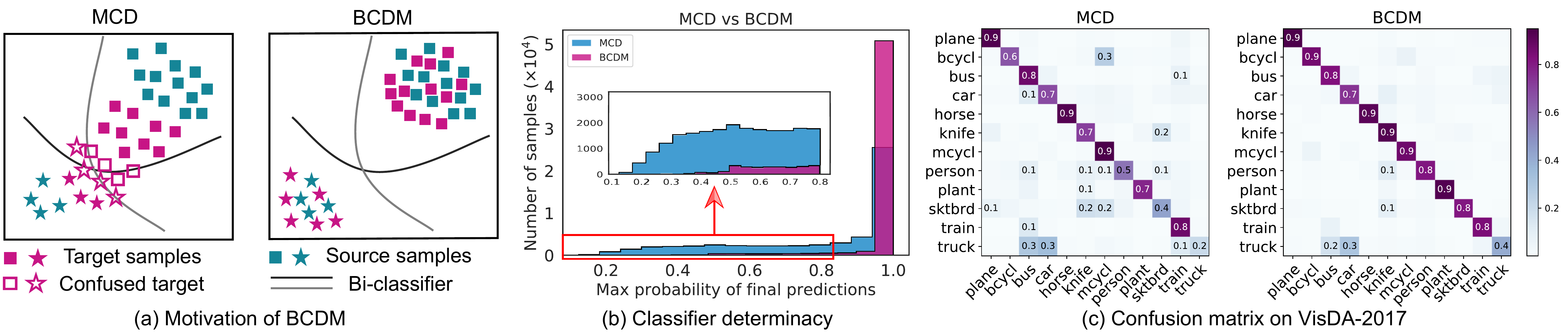}
  \caption{(a): Comparison of previous and the proposed bi-classifier matching methods. (b)(c): Empirically unveil that \textit{higher classifier determinacy} indicates more discriminative representations on VisDA-2017 dataset. We first show the frequency of max probability of each target sample's prediction, where the MCD obtains more uncertain predictions with small values while our BCDM obtains more confident predictions ($>0.9$). Similarly, the tendency that bi-classifier in MCD gives ambiguous predictions also results in fewer target samples on the diagonal in (c). It implicitly implies that scarcity of determinacy during bi-classifier adversarial training deteriorates the discriminability in the target domain.}
  \label{Fig_motivation}
\end{figure*}

There exist two popular paradigms to conduct adversarial domain adaptation either by constructing a domain discriminator or by utilizing two distinct classifiers. 
However, the first category will bring side effects of the deterioration of feature discriminability, though the feature transferability is strengthened~\cite{BSP}. As for the second paradigm, the selection of classifier discrepancy loss between two task-specific classifiers is critical for expected adaptability. 
The straightforward $\ell_1$ distance between predictions $\boldsymbol{p}_1$ and $\boldsymbol{p}_2$, i.e., $|\boldsymbol{p}_1-\boldsymbol{p}_2|_1$, in Maximum Classifier Discrepancy~\cite{MCD}, merely measures the bi-classifier discrepancy at the corresponding class positions while ignoring the constraint of the relevance information between classes.
Thus the certainty of predictions cannot be guaranteed, which may further hurt the learning behavior. For instance, there could be uncertain predictions such as values of $[0.34,0.33,0.33]$ and $[0.34,0.33,0.33]$ by minimizing the $\ell_1$ distance between bi-classifier, which results in generating confused target features.
Such situation can lead to ambiguous decisions and thus should be avoided. As illustrated in Fig.~\ref{Fig_motivation}, due to the domain shift dilemma, numerous target samples are prone to be around the decision boundary.  In other words, the classifier determinacy that can put confident labels on target samples, is lacking, thus impacting the feature discriminability. 

To tackle the aforementioned issues, we propose a Bi-Classifier Determinacy Maximization (BCDM) method together with a novel metric called classifier determinacy disparity (CDD). Specifically, CDD contains all the probabilities that the predictions of bi-classifier are inconsistent, which is defined as the summation of product between bi-classifier different class predictions. Regarding the training procedure, we first maximize the CDD loss which enforces bi-classifier to generate disagreement across classes, so as to explore diverse output spaces under the source supervision. And then we minimize the CDD loss to push the two domains together class-wisely by generating discriminative features that make the predictions consistent and determined. In this way, our method is able to delve into pushing the target samples away from the decision boundary. Consequently, the prediction consistency guarantees target samples to be close to the source regions class-wisely. The bi-classifier determinacy equips target features with categorical discriminability, which further improves the accuracy.

On top of the proposed CDD metric, the BCDM is able to simultaneously enhance the classifier determinacy and prediction diversity by adversarially optimizing the CDD loss. 
Our method has been shown to be simple yet effective.
In addition, theoretical guarantees about BCDM in the aspect of generalization error bound on the target domain are elaborated. 
The proposed method is validated on various applications and scenarios including a toy dataset, four cross-domain image classification benchmark datasets, and a prevalent ``synthetic-2-real'' semantic segmentation set-up. Extensive experiments show that our method achieves significant improvements over state-of-the-art UDA methods.

\section{Related Work}
Existing deep domain adaptation methods can be mainly grouped into two categories: discrepancy optimization based methods~\cite{DAN,DCAN,DRCN} and adversarial learning based methods~\cite{DANN,JADA,SDTADT}. In the first group, DAN~\cite{DAN} has proposed to reduce the gap between domains via minimizing a discrepancy metric named maximum mean discrepancy (MMD)~\cite{MMD}. Later, JAN~\cite{JAN} utilizes modified versions of MMD to align joint distributions for more effective transfer.

Adversarial learning based UDA methods represent another line of works, which share the similar spirits with GANs~\cite{GAN}.  They mainly focus on learning indistinguishable representations for data in both source and target domains in an adversarial manner. Specifically, one type of approaches exploits a domain discriminator to distinguish whether the input samples are from source or target domain, while a generator is trained to generate features that can confuse the discriminator~\cite{DANN,MADA,AdaSegNet}. The other group of methods can utilize two distinct task-specific classifiers acting as a domain discriminator~\cite{ADR,MCD,SWD,CLAN,STAR}. 

Our proposed method falls into the adversarial learning category and inherits the similar bi-classifier training strategy. To name a few, MCD~\cite{MCD} uses the $\ell_1$ distance to calculate the classifier discrepancy which only considers the disagreement between the two classifiers with respect to the same class prediction. By comparison, the sliced Wasserstein distance in~\cite{SWD} is more powerful to capture the geometrically meaningful dissimilarity between two classifiers through optimal transport, however, this method not only increases the computation complexity but requires a sophisticated hyper-parameter tuning process. Very recently, STAR finds that more classifiers can achieve better adaptation performance and models infinite number of classifiers through reparameterization trick~\cite{STAR}. CLAN~\cite{CLAN} applies different adversarial weight to different pixels in semantic segmentation without explicitly incorporating the classes into the model. %It is worth noting that all the metrics utilized by the above methods ignore the class prediction relevance while our proposed CDD does, through which we compensate the problem of indeterminacy of target features.

Though these approaches have yielded promising results, they actually neglect the determinacy of the classifier, which may result in the decision boundary passing through high data density and misalignment among classes. To address this, here we propose a bi-classifier determinacy maximization (BCDM) algorithm with a novel metric, which formulates the classifier discrepancy as the relevance of different class predictions with theoretical justifications. Extensive experiments have demonstrated that our method can significantly outperform the competing baseline approaches.

\section{Bi-Classifier Determinacy Maximization}
Assume a labelled source domain $\mathcal{S}=\{\mathcal{X}_s,\mathcal{Y}_s\}=\{(\x^s_i, y^s_i)\}^{n_s}_{i=1}$ with $n_s$ examples, where $y^s_i$ is the corresponding label of $\x^s_i$, and an unlabelled target domain $\mathcal{T}=\{\mathcal{X}_t\}=\{\x^t_j\}^{n_t}_{j=1}$ with $n_t$ samples, the goal of our work is to learn a deep neural network $\phi : \mathcal{X}_t \rightarrow \mathcal{Y}_t$ that can learn discriminative representations with determined predictions on the target domain. The model consists of a feature generator $G$ and two distinct classifiers $C_1$ and $C_2$, and $\theta_g, \theta_{c_1}$ and $\theta_{c_2}$ are the corresponding network parameters. To better understand, we show the overview of the BCDM in Fig. \ref{Fig_architecture}. 

In this section, we first introduce a novel classifier determinacy disparity (CDD) to measure the classifier discrepancy. Then, we report our bi-classifier determinacy maximization (BCDM) and investigate its theoretical guarantees.

\subsection{Classifier Determinacy Disparity}
\label{Section3.1}
To measure the classifier discrepancy, we first propose a classifier determinacy disparity (CDD) metric. Given $\boldsymbol{p}_1$ and $\boldsymbol{p}_2$ as the two classifiers' softmax probabilities, it can be seen that $\boldsymbol{p}_1$ and $\boldsymbol{p}_2 \in \mathbb{R}^{ K \times 1}$ satisfies:
\begin{small}
\begin{equation}\label{output predictions}
\sum^K_{k=1}{p}_j^k = 1, ~~~~\mathrm{s.t.}~ {p}_j^k \ge 0,~~ \forall k = 1,\cdots, K, ~~~j = 1,2,
\end{equation}
\end{small}%
where $p_j^k$ denotes the $k$-th element of the softmax probabilistic output from $C_j$, i.e., the probability that $C_j$ classifies the sample into the $k$-th class, and $K$ is the amount of possible categories. Previous approach, i.e., MCD, deploys $\ell_1$ distance $|\boldsymbol{p}_1-\boldsymbol{p}_2|_1$ to measure the discrepancy between two prediction distributions. However, $\ell_1$ distance only considers the similarity between $\boldsymbol{p}_1$ and $\boldsymbol{p}_2$ at the same position of the corresponding class, and thus ignores the relevance between classes, which may affect the confidence of predictions. For instance, though the $\ell_1$ distance between $\boldsymbol{p}_1=[0.34,0.33,0.33]$ and $\boldsymbol{p}_2=[0.34, 0.33, 0.33]$ is small to zero, such predictions are rather uncertain across classes and thus the classifiers are less robust to perturbations. Therefore, we investigate the classifier discrepancy by Bi-classifier Prediction Relevance Matrix $\mathbf{A}$:
\begin{small}
\begin{equation}\label{output predictions}
\mathbf{A} = \boldsymbol{p}_1\boldsymbol{p}^\top_2,
\end{equation}
\end{small}%
where $\mathbf{A}$ is a square matrix of size $K \times K$. $A_{mn}=p_1^m p_2^n (m,n \in \{1,2,\cdots,K\})$ is the element in the $m$-th row and $n$-th column of $\mathbf{A}$, which represents the probability product of $C_1$ classifying the sample into the $m$-th category and $C_2$ classifying it into the $n$-th category. In other words, $\mathbf{A}$ could effectively evaluate the bi-classifier prediction relevance across different classes.With the characteristics of $\mathbf{A}$,  to minimize the classifier discrepancy with respect to the prediction relevance, we have no alternative but make the two predictions consistent and correlated, i.e., maximizing the diagonal elements of $\mathbf{A}$. This also enables the predictions to be determined with high confidence for the predicted class. At the same time, the non-diagonal elements of $\mathbf{A}$ can be seen as fine-grained confusion information of the two classifiers. Therefore, we define the CDD loss as
\begin{small}
\begin{equation}\label{cdd_formalization}
\Gamma(\boldsymbol{p}_1,\boldsymbol{p}_2)= \sum_{m,n=1}^KA_{mn}-\sum_{m=1}^KA_{mm}=\sum_{m\neq n}^KA_{mn}.
\end{equation}
\end{small}%
\begin{figure}[!htbp]
    \centering
    \includegraphics[width=0.48\textwidth]{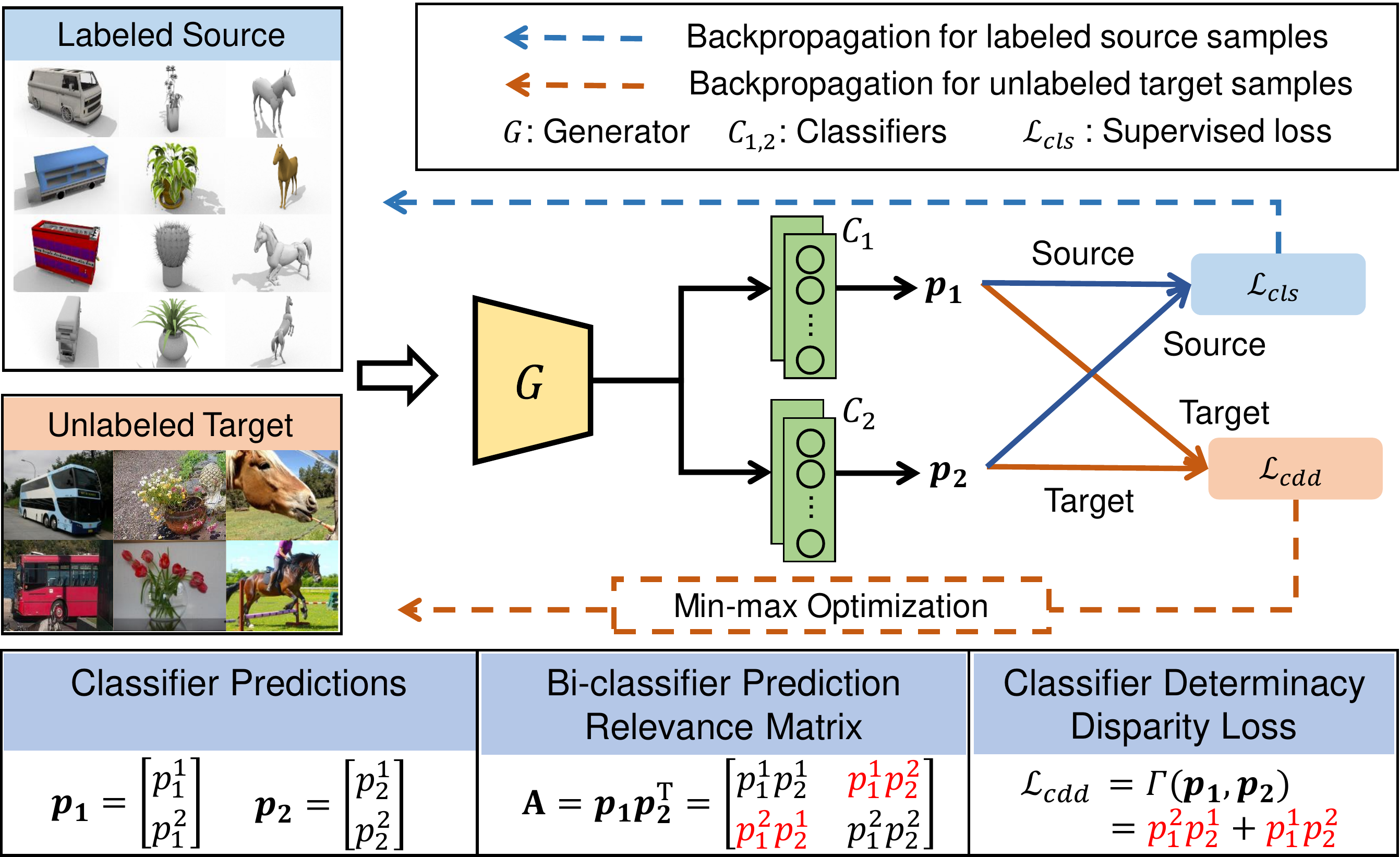}
    \caption{Approach overview. In BCDM, \textbf{step 1} optimizes all the networks by minimizing the source classification loss. Under the source supervision, \textbf{step 2} maximizes the proposed classifier determinacy disparity (CDD) loss on target domain by updating bi-classifier parameters. The generator seeks to generate transferable and discriminative features by minimizing CDD loss on target domain in \textbf{step 3}. An example of calculating CDD loss is also provided in the bottom, where $p_i^j$ denotes the probability of classifier $C_i$ classifying the input sample into the j-th category.}
    \label{Fig_architecture}
\end{figure}%
The first item in Eq. \eqref{cdd_formalization} equals to 1 since $\boldsymbol{p}_1$ and $\boldsymbol{p}_2$ are softmax outputs. We can see that CDD contains all the probabilities that the prediction of $C_1$ is inconsistent with the prediction of $C_2$, so it can be used as a measure to evaluate the discrepancy between predictions. Notably, $\Gamma(\boldsymbol{p}_1,\boldsymbol{p}_2)$ satisfies the properties of a metric and the proof is provided in the supplementary materials. Moreover, only when the two probabilities are consistent and fully determined, such as [1,0,0] and [1,0,0], can the CDD reach the minimum value of zero, which guarantees the discriminability of features.
%specifically.

\subsection{Bi-Classifier Determinacy Maximization}
Based on the introduced CDD metric, here we propose a bi-classifier determinacy maximization (BCDM) algorithm for adversarial domain adaptation. %The model consists of a feature generator $G$ and two classifiers $C_1$ and $C_2$, and $\theta_g, \theta_{c_1}$ and $\theta_{c_2}$ are the corresponding network parameters. For a better understanding, we illustrate the overview of our method in Fig. \ref{Fig_architecture}. \par
In UDA problems, it is a prerequisite to ensure that the classifiers can correctly classify source data. Thus, to fully exploit the supervision from source domain, we first train the whole network to minimize the standard supervised loss on source data as follows:
\begin{small}
\begin{equation}\label{min-sup-loss}
\min_{\theta_g,\theta_{c_1},\theta_{c_2}}\frac{1}{n_s}\sum^{n_s}_{i=1} \mathcal{L}_{cls}(\boldsymbol{x}_i^s,y_i^s)=\frac{1}{2n_s}\sum^{n_s}_{i=1}\sum^{2}_{j=1} \mathcal{L}_{ce}({\boldsymbol{p}_j}_i,y_i^s),
\end{equation}
\end{small}%
where ${\boldsymbol{p}_j}_i = C_j(G(\boldsymbol{x}_i^s))$ and $\mathcal{L}_{ce}(\cdot,\cdot)$ is the cross entropy loss. With the full supervision on source data, the feature discriminability for source data can be well preserved. However, it is worth noting that the learned decision boundary on the source domain cannot be directly transferred to the target domain due to the distribution divergence between the two domains. Therefore, we further propose to train the two distinct classifiers $C_1$ and $C_2$ on target domain in an adversarial manner, which aims to learn transferable features and discriminative decision boundary on target domain. 
To achieve this, here, we utilize the designed metric CDD to measure the classifier discrepancy. 
Specifically, the two classifiers are trained to maximize CDD on target domain data,
\begin{small}
\begin{equation}\label{max-cd-loss}
\max_{\theta_{c_1},\theta_{c_2}}\frac{1}{n_t}\sum^{n_t}_{i=1}\mathcal{L}_{cdd}(\boldsymbol{x}_i^t)
=\frac{1}{n_t}\sum^{n_t}_{i=1}\Gamma({\boldsymbol{p}_1}_i,{\boldsymbol{p}_2}_i)\,.
\end{equation}
\end{small}%
Here, ${\boldsymbol{p}_1}_i$ and ${\boldsymbol{p}_2}_i$ are the softmax outputs of $C_1$ and $C_2$ for target sample $\boldsymbol{x}_i^t$, respectively. Training the classifiers via maximizing Eq.~\eqref{max-cd-loss} can effectively detect target samples that are far from the support of the source domain. In essence, the CDD loss encourages two classifiers to perform differently across categories rather than that of the same category in MCD~\cite{MCD}. Besides, Eq.~(\ref{max-cd-loss}) could equip classifiers with more diverse probabilities in a quite simple way that potentially facilitates the discovery of more ambiguous target data near the decision boundary. 

 Through maximizing Eq. (\ref{max-cd-loss}), target samples are mostly located near the decision boundary so as to be detected with diverse predictions, which brings uncertainty to target feature learning. To encourage yielding representations that are discriminative and achieve bi-classifier determinacy, we further train the generator $G$ on target domain to minimize CDD while fixing the classifiers,
 \begin{small}
 \begin{equation}\label{min-cd-loss}
 \min_{\theta_{g}}\frac{1}{n_t}\sum^{n_t}_{i=1}\mathcal{L}_{cdd}(\boldsymbol{x}_i^t)
 =\frac{1}{n_t}\sum^{n_t}_{i=1}\Gamma({\boldsymbol{p}_1}_i,{\boldsymbol{p}_2}_i).
 \end{equation}
 \end{small}
 
Considering the specific property of CDD that it can reach the minimum value only when the two predictions are consistent and fully determined, i.e., 100\% certainty, the generator $G$ is capable of generating target representations that possess great discriminability by minimizing Eq. (\ref{min-cd-loss}), and thus
further benefits the learning task. In contrast, MCD~\cite{MCD} and SWD~\cite{SWD} cannot guarantee whether the decision boundary separates the categorical clusters in the target domain since they only focus on the consistency of two predictions. 
Obviously, the worst case is that the generator is still able to generate confused target features even if the output probabilities are consistent, which severely hinders the feature discriminability.

\textbf{Remarks.} \textbf{On one hand}, to maximize the CDD, we need to minimize the summation of the diagonal elements of matrix $\mathbf{A}$, i.e., $\boldsymbol{p}_1$ and $\boldsymbol{p}_2$ need to be as inconsistent as possible with totally different categorical predictions. It enables us to explore more possibilities of classification for each target data, which increases prediction diversity.
\textbf{On the other hand}, to minimize $\Gamma(\boldsymbol{p}_1,\boldsymbol{p}_2)$ on the target set, the bi-classifier predictions on target data need to be highly correlated and with high certainty, i.e., ideally  $\boldsymbol{p}_1=\boldsymbol{p}_2$ and with probability 1 for the predicted class.
Via this minimization, the predictions should be fully determined and thus push the target domain distribution far away from the decision boundary, which stimulates $G$ to generate discriminative representations. Through such adversarial learning of the CDD loss between the two prediction outputs, we promote the feature discriminability and preserve the diversity of predictions simultaneously. % Notably, we don't need any prior knowledge or balance the two manually since it has been achieved during the min-max process implicitly. 
To be clear, we summarize the training process of BCDM based on the above discussions as Algorithm~\ref{alg:CDD}. Deep Embedded Validation~\cite{DEV_2019_ICML} is conducted to select hyper-parameters, and then we fix $\alpha = 0.01$ in all experiments.

\begin{algorithm}[!htbp]
    \caption{The Algorithm of BCDM for UDA.}
    \label{alg:CDD}
    \begin{algorithmic} [1]
    \REQUIRE Source samples $\{(\x^s_i, y^s_i)\}^{n_s}_{i=1}$; Target samples $\{\x^t_j\}^{n_t}_{j=1}$; Trade-off parameter: $\alpha$ Batch size: $B$; max$\_$iteration.
    \ENSURE Optimal parameters $\widehat \theta_g, \widehat \theta_{c_1}, \widehat \theta_{c_2}$ of feature generator $G$ and task-specific classifiers $C_1, C_2$.
    \STATE Initialize $\theta_g$ using pre-trained model on ImageNet, and randomly initialize $\theta_{c_1}, \theta_{c_2}$.
    \REPEAT
    \STATE Randomly sample mini-batch of $B$ source samples and $B$ target samples.
    \STATE Update $\theta_g,\theta_{c_1},\theta_{c_2}$ under the source supervision:
        $\min\limits_{\theta_g,\theta_{c_1},\theta_{c_2}}\frac{1}{B}\sum^{B}_{i=1}\mathcal{L}_{cls}
        (\boldsymbol{x}_i^s,y_i^s)$.
    \STATE Update $\theta_{c_1},\theta_{c_2} $ by maximizing  classifier determinacy disparity:\\
    $\min\limits_{\theta_{c_1},\theta_{c_2}}\frac{1}{B}\sum^{B}_{i=1}\mathcal{L}_{cls}
    (\boldsymbol{x}_i^s,y_i^s)-\alpha\mathcal{L}_{cdd}(\boldsymbol{x}_i^t)$.

    \STATE Update $\theta_g$ by minimizing classifier determinacy disparity:
    $\min\limits_{\theta_{g}}\frac{1}{B}\sum^{B}_{i=1}\alpha\mathcal{L}_{cdd}(\boldsymbol{x}_i^t)$.

    \UNTIL{max\_iteration is reached.}
    \end{algorithmic}
\end{algorithm}

\subsection{Generalization Bound of BCDM}
In this subsection, we further provide the theoretical guarantees for BCDM. Limited by space, all the proofs are given in the supplementary materials.

First, we propose a novel discrepancy to measure the difference between two distributions based on our classifier determinacy disparity.
Assuming $\mathcal{H}$ is a hypothesis space, given two hypotheses $h_1,h_2 \in \mathcal{H}$, we define the expected Determinacy Disparity on some distribution $D$ as: $dis_{D}(h_1,h_2)\triangleq\mathbb{E}_{D}\Gamma(\boldsymbol{p_1},\boldsymbol{p}_2)$.

\begin{Def}
\par
Given a hypothesis set $\mathcal{H}$ and a specific hypothesis $h \in \mathcal{H}$, we define the Determinacy Disparity Discrepancy as:
\begin{small}
\begin{equation}\label{L divergency}
\begin{split}
d_{h,\mathcal{H}}(\mathcal{S},\mathcal{T})
&\triangleq \sup_{h'\in\mathcal{H}}(dis_{\mathcal{T}}(h',h)-dis_{\mathcal{S}}(h',h)) \\
&=\sup_{h'\in\mathcal{H}}(\mathbb{E}_{\mathcal{T}}\Gamma(\boldsymbol{p'},\boldsymbol{p})-\mathbb{E}_{\mathcal{S}}\Gamma(\boldsymbol{p'},\boldsymbol{p})) \,.
\end{split}
\end{equation}
\end{small}
\end{Def}

Compared with $\mathcal{H}\Delta\mathcal{H}-$distance~\cite{A-distance}, the Determinacy Disparity Discrepancy (DDD) we proposed depends upon a hypothesis space $\mathcal{H}$ and a specific classifier $h$, which is easier to be optimized. Moreover, since the DDD is well-designed, we verify that it satisfies symmetrical characteristic and trigonometric inequality through strict proof. Thus, according to the theory proposed in~\cite{DBLP}, we can deduce that the DDD can effectively measure the difference between distributions on domain adaptation, the detail of which is given in the supplementary materials. Besides, we can obtain an upper bound of the expected target error $e_t(h)$ through the DDD.

\begin{prop}\label{prop1}
For every classifier $h \in \mathcal{H}$,
\begin{small}
\begin{equation}\label{eq-target-error-bound2}
\begin{split}
e_t(h)&\leq e_s(h)+d_{h,\mathcal{H}}(\mathcal{S},\mathcal{T})+ \lambda.
\end{split}
\end{equation}
\end{small}where $e_s(h)$ stands for the expected error on source samples, and $\lambda$ denotes the error of ideal joint hypothesis, which can be considered rather small if the hypothesis space is sufficient enough \cite{MDD}.
\end{prop}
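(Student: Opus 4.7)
The plan is to mimic the classical Ben-David style derivation of a target-error bound from a source error plus a distributional discrepancy, but adapted to the $\Gamma$-based Determinacy Disparity introduced in Section 3.1. The starting point is the fact (proved in the supplementary material, as asserted by the authors in the discussion following the definition of $\Gamma$) that $\Gamma$ is a metric on softmax outputs, so in particular it is symmetric and obeys the triangle inequality. Integrating over any distribution $D$ then promotes these two properties to the expected disparity $dis_D(\cdot,\cdot)$.

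Next I would express the expected errors in terms of $dis$ against the true labelling function $f$, where class labels are viewed as one-hot vectors in the $K$-simplex. Under this convention $e_s(h) = dis_\mathcal{S}(h, f)$ and $e_t(h) = dis_\mathcal{T}(h, f)$. I would then introduce the ideal joint hypothesis $h^{\ast} = \arg\min_{h'\in\mathcal{H}}\bigl(e_s(h') + e_t(h')\bigr)$ and set $\lambda = e_s(h^{\ast}) + e_t(h^{\ast})$, matching the definition of $\lambda$ used in the statement.

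The proof then reduces to chaining three inequalities. First, applying the triangle inequality for $dis_\mathcal{T}$ with $h^{\ast}$ as the intermediate hypothesis gives $e_t(h) \leq dis_\mathcal{T}(h, h^{\ast}) + e_t(h^{\ast})$. Second, since $h^{\ast}\in\mathcal{H}$, by symmetry it plays the role of the $h'$ in the supremum defining $d_{h,\mathcal{H}}(\mathcal{S},\mathcal{T})$, yielding $dis_\mathcal{T}(h, h^{\ast}) - dis_\mathcal{S}(h, h^{\ast}) \leq d_{h,\mathcal{H}}(\mathcal{S},\mathcal{T})$. Third, another application of the triangle inequality gives $dis_\mathcal{S}(h, h^{\ast}) \leq e_s(h) + e_s(h^{\ast})$. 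Summing the three inequalities telescopes the $dis_\mathcal{T}(h, h^{\ast})$ and $dis_\mathcal{S}(h, h^{\ast})$ terms and produces exactly $e_t(h) \leq e_s(h) + d_{h,\mathcal{H}}(\mathcal{S},\mathcal{T}) + \lambda$.

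The main obstacle is the metric verification: checking that $\Gamma$, defined as the sum of cross products $\sum_{m\neq n} p_1^m p_2^n$, genuinely satisfies the triangle inequality on the probability simplex. This is not a standard distance and requires a careful argument; without it the first and third steps above collapse. Fortunately the authors state this is handled in the supplementary material, so I would simply invoke it. A secondary subtlety is the identification $e_s(h) = dis_\mathcal{S}(h, f)$ for a one-hot labelling $f$, which follows from the observation that $\Gamma(\boldsymbol{p}, \boldsymbol{y}) = 1 - p^{y}$ whenever $\boldsymbol{y}$ is a one-hot vector, making $\Gamma(\cdot, f)$ a natural prediction-loss surrogate; this is a routine check rather than a genuine difficulty.
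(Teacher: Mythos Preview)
Your proposal is correct and is, in substance, the same argument the paper relies on. The paper's proof in the supplementary material is extremely terse: it observes that $\Gamma$ is bounded in $[0,1]$, symmetric, and satisfies the triangle inequality (their Proposition on the properties of CDD), and then invokes Theorem~8 of Mansour et al.\ as a black-box lemma to conclude. What you have written is precisely the unfolded Ben-David/Mansour triangle-inequality chain that underlies that cited theorem: introduce the ideal joint hypothesis $h^\ast$, apply the triangle inequality for $dis_{\mathcal{T}}$ and $dis_{\mathcal{S}}$ through $h^\ast$, and bound the cross-domain difference by the supremum defining $d_{h,\mathcal{H}}$. Your version has the minor advantage of landing directly on the fixed-$h$ discrepancy $d_{h,\mathcal{H}}(\mathcal{S},\mathcal{T})$ rather than the two-sided $disc_L$ of the cited lemma (which takes a supremum over both hypotheses and is thus a priori larger), so your derivation is slightly tighter and more self-contained, but otherwise the two proofs coincide.
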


Then we introduce the Rademacher complexity which is commonly used in the generalization theory as a measurement of richness for a particular hypothesis space \cite{rademacher}, and provide a generalization bound for our method BCDM based on the Determinacy Disparity Discrepancy defined above.
To begin with, we introduce a Rademacher complexity bound as follows.\par

\begin{lemma} \label{lemma1}
Suppose that $\mathcal{G}$ is a class of function maps $\mathcal{X} \xrightarrow{}{[0,1]}$. Then for any $\delta > 0$, with probability at least $1-\delta$ and sample size $n$, the following holds for all $g\in\mathcal{G}$:
\begin{small}
\begin{equation}\label{Rademacher lemma}
\begin{split}
|\mathbb{E}_Dg-\mathbb{E}_{\widehat{D}}g| \leq 2\mathfrak{R}_{n,D}(\mathcal{G}) + \sqrt{\frac{log\frac{2}{\delta}}{2n}}.
\end{split}
\end{equation}
\end{small}
\end{lemma}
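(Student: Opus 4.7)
The plan is to combine the standard two-step recipe from statistical learning theory: a concentration inequality controls the deviation of $\sup_{g\in\mathcal{G}}|\mathbb{E}_D g - \mathbb{E}_{\widehat{D}}g|$ around its expectation, and a symmetrization argument bounds that expectation by the Rademacher complexity $\mathfrak{R}_{n,D}(\mathcal{G})$. To avoid juggling the absolute value, I would first establish a one-sided bound for $\Phi(S)\triangleq\sup_{g\in\mathcal{G}}\bigl(\mathbb{E}_D g-\tfrac{1}{n}\sum_{i=1}^n g(X_i)\bigr)$ and then union-bound with its mirror image $\Phi'(S)\triangleq\sup_{g\in\mathcal{G}}\bigl(\tfrac{1}{n}\sum_{i=1}^n g(X_i)-\mathbb{E}_D g\bigr)$ to recover the two-sided statement with a $\log(2/\delta)$ inside the square root.

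The first step is concentration. Because every $g\in\mathcal{G}$ takes values in $[0,1]$, replacing any single coordinate $X_i$ of the sample $S=(X_1,\dots,X_n)$ by an independent copy $X_i'$ perturbs each empirical average $\tfrac{1}{n}\sum_k g(X_k)$ by at most $1/n$, and hence $|\Phi(S)-\Phi(S^{(i)})|\leq 1/n$. McDiarmid's bounded differences inequality then yields
\begin{equation*}
\Pr\Bigl[\Phi(S)-\mathbb{E}[\Phi(S)]\geq \sqrt{\tfrac{\log(2/\delta)}{2n}}\Bigr]\leq \delta/2,
\end{equation*}
and symmetrically for $\Phi'$. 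A union bound over these two events delivers the $\sqrt{\log(2/\delta)/(2n)}$ deviation term against $\mathbb{E}[\Phi]$ and $\mathbb{E}[\Phi']$ with total failure probability at most $\delta$.

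The second step is symmetrization to bound both expectations by $2\mathfrak{R}_{n,D}(\mathcal{G})$. Writing $\mathbb{E}_D g=\mathbb{E}_{S'}\bigl[\tfrac{1}{n}\sum_i g(X_i')\bigr]$ with a ghost sample $S'=(X_1',\dots,X_n')$ drawn i.i.d.\ from $D$, Jensen's inequality pulls the expectation over $S'$ outside the supremum:
\begin{equation*}
\mathbb{E}_S[\Phi(S)]\leq \mathbb{E}_{S,S'}\sup_{g\in\mathcal{G}}\tfrac{1}{n}\sum_{i=1}^n\bigl(g(X_i')-g(X_i)\bigr).
\end{equation*}
Since $X_i$ and $X_i'$ are i.i.d., inserting independent Rademacher signs $\sigma_i\in\{\pm 1\}$ leaves the joint distribution of $(\sigma_i(g(X_i')-g(X_i)))_{i}$ unchanged; splitting the sum and bounding each piece separately produces the factor $2$ and gives $\mathbb{E}[\Phi]\leq 2\mathfrak{R}_{n,D}(\mathcal{G})$. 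The identical manipulation handles $\Phi'$. Combining with the concentration step finishes the proof.

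Nothing here is technically deep — this is essentially the textbook Rademacher bound — so the main thing to watch is bookkeeping: verifying that the $[0,1]$ range really forces the $1/n$ bounded-differences constant, and that the factor of $2$ in front of $\mathfrak{R}_{n,D}(\mathcal{G})$ comes from the symmetrization step rather than from the union bound, so that the union bound only contributes the $\log(2/\delta)$ (as opposed to $\log(1/\delta)$) inside the square root.
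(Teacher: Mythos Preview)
Your proposal is correct and follows the standard McDiarmid-plus-symmetrization route that underlies this bound. The paper, however, does not prove this lemma at all: both in the main text and in the supplement it is simply quoted as ``Theorem~3.1 of \cite{rademacher}'' (Mohri et al.'s \emph{Foundations of Machine Learning}) and used as a black box. So there is no paper proof to compare against---you have supplied the textbook argument that the authors chose to cite rather than reproduce.
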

Further, we define a new function class $\mathcal{G}_{\Gamma}\mathcal{H}$ mapping $\mathcal{Z}=\mathcal{X}\times\mathcal{Y}$ to [0,1], which serves as a ``CDD'' version of the symmetric difference hypothesis space $\mathcal{H}\Delta\mathcal{H}$.

\begin{Def}
 Given a hypothesis space $\mathcal{H}$ and a Classifier Disparity Discrepancy function $\Gamma$, the $\mathcal{G}_{\Gamma}\mathcal{H}$ is defined as
\begin{small}
\begin{equation}\label{function family definition}
\begin{split}
\mathcal{G}_{\Gamma}\mathcal{H}=\{\x\xrightarrow{}\Gamma(h_1(\x),h_2(\x))|h_1,h_2 \in \mathcal{H}\}.
\end{split}
\end{equation}
\end{small}
\end{Def}
Based on this definition, we induce a Rademacher complexity bound for the difference between the Determinacy Disparity and its empirical version.
\begin{prop}\label{prop2}
For any $\delta\ge0$, with probability at least $1-\delta$, the following holds for all $h,h'\in \mathcal{H}$:
\begin{small}
\begin{equation}\label{generalization1}
\begin{split}
|dis_{D}(h',h)-dis_{\widehat{D}}(h',h)|\leq2\mathfrak{R}_{n,D}(\mathcal{G}_\Gamma\mathcal{H})+\sqrt{\frac{log\frac{2}{\delta}}{2n}}.
\end{split}
\end{equation}
\end{small}
\end{prop}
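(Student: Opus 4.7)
The plan is to derive Proposition 2 as a direct instantiation of Lemma 1 applied to the function class $\mathcal{G}_\Gamma\mathcal{H}$, after first verifying that every element of this class indeed maps into $[0,1]$ so that Lemma 1 is applicable.

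The first step is the range check. For any $h_1,h_2\in\mathcal{H}$ producing softmax outputs $\boldsymbol{p}_1,\boldsymbol{p}_2\in\mathbb{R}^K$ with nonnegative entries summing to $1$, we have
\begin{equation*}
\Gamma(\boldsymbol{p}_1,\boldsymbol{p}_2)=\sum_{m\neq n}p_1^m p_2^n = \Big(\sum_m p_1^m\Big)\Big(\sum_n p_2^n\Big)-\sum_m p_1^m p_2^m = 1-\sum_m p_1^m p_2^m.
\end{equation*}
Since $0\leq\sum_m p_1^m p_2^m\leq 1$ (nonnegativity and Cauchy--Schwarz, or simply $p_1^m p_2^m\leq p_1^m$), every function $g(\x)=\Gamma(h_1(\x),h_2(\x))$ in $\mathcal{G}_\Gamma\mathcal{H}$ is a map $\mathcal{X}\to[0,1]$. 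Hence $\mathcal{G}_\Gamma\mathcal{H}$ satisfies the hypothesis of Lemma~\ref{lemma1}.

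The second step is to apply Lemma~\ref{lemma1} to the class $\mathcal{G}_\Gamma\mathcal{H}$: with probability at least $1-\delta$ over an i.i.d.\ sample of size $n$ from $D$,
\begin{equation*}
\sup_{g\in\mathcal{G}_\Gamma\mathcal{H}}\Big|\mathbb{E}_D g-\mathbb{E}_{\widehat{D}}g\Big|\leq 2\mathfrak{R}_{n,D}(\mathcal{G}_\Gamma\mathcal{H})+\sqrt{\frac{\log\frac{2}{\delta}}{2n}}.
\end{equation*}
For any fixed $h,h'\in\mathcal{H}$ the map $\x\mapsto \Gamma(h'(\x),h(\x))$ is by construction a member $g_{h',h}$ of $\mathcal{G}_\Gamma\mathcal{H}$, and by the definition of the expected Determinacy Disparity we have $\mathbb{E}_D g_{h',h}=dis_D(h',h)$ and $\mathbb{E}_{\widehat{D}}g_{h',h}=dis_{\widehat{D}}(h',h)$. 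Taking the supremum over $\mathcal{H}\times\mathcal{H}$ dominates the pointwise gap, so the inequality of Proposition~\ref{prop2} holds simultaneously for all $h,h'\in\mathcal{H}$ on the same high-probability event.

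I expect no real obstacle: the whole argument is a clean reduction to Lemma~\ref{lemma1}, and the only nontrivial point is the range verification above, which is what motivated defining $\mathcal{G}_\Gamma\mathcal{H}$ via $\Gamma$ in the first place. If the authors wish to make the bound fully self-contained, the mildly subtle part is tracking that the single failure event from Lemma~\ref{lemma1} is already a uniform event over $\mathcal{G}_\Gamma\mathcal{H}$, so no extra union bound over pairs $(h,h')$ is required; this is what yields a clean $\log(2/\delta)$ rather than a complexity-inflated term.
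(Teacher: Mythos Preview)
Your proposal is correct and follows essentially the same approach as the paper: identify $dis_D(h',h)-dis_{\widehat D}(h',h)$ with $\mathbb{E}_D g-\mathbb{E}_{\widehat D}g$ for $g\in\mathcal{G}_\Gamma\mathcal{H}$ and then invoke Lemma~\ref{lemma1} uniformly over that class. Your explicit range check $\Gamma=1-\sum_m p_1^m p_2^m\in[0,1]$ is a useful addition that the paper's proof leaves implicit.
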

Finally, we can come to the final generalization bound by combining the Rademacher complexity bound of CDD and Proposition~\ref{prop1}.

\begin{thm}\label{thm1}
For any $\delta\ge 0$, with probability $1-3\delta$, we have the following generalization bound for any classifier $h \in \mathcal{H}$~:
\begin{small}
\begin{equation}\label{generalization3}
\begin{split}
\!\!err_\mathcal{T}(h)\!\!\leq\!\! err_{\widehat{\mathcal{S}}}(h)\!+\!d_{h,\mathcal{H}}(\widehat{\mathcal{S}},\widehat{\mathcal{T}})\!+\!\lambda
\!+\! \Omega,
\end{split}
\end{equation}
\end{small}%
where $\Omega=2\mathfrak{R}_{n,\mathcal{S}}(\mathcal{G}_\Gamma\mathcal{H})\!+\!2\mathfrak{R}_{n,\mathcal{S}}
(\mathcal{H})\!+\!2\sqrt{\frac{log\frac{2}{\delta}}{2n}}\!+\!2\mathfrak{R}_{m,\mathcal{T}}
(\mathcal{G}_\Gamma\mathcal{H})\!+\!\sqrt{\frac{log\frac{2}{\delta}}{2m}}$. $n,m$ are the amount of samples in source domain and target domain respectively, and $\mathcal{H}$ is the hypothesis set.
\end{thm}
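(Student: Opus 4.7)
The plan is to derive the empirical generalization bound by chaining Proposition~\ref{prop1} with three concentration inequalities: one for the source error from Lemma~\ref{lemma1}, and two applications of Proposition~\ref{prop2} to replace the population Determinacy Disparity Discrepancy with its empirical counterpart on both domains. A union bound over the three $1-\delta$ events yields the claimed $1-3\delta$ confidence level, and careful bookkeeping of the slack terms should recover $\Omega$ exactly.

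First, I would start from the population-level inequality $e_t(h) \le e_s(h) + d_{h,\mathcal{H}}(\mathcal{S},\mathcal{T}) + \lambda$ established in Proposition~\ref{prop1}. Applying Lemma~\ref{lemma1} to the $[0,1]$-valued loss class induced by $\mathcal{H}$ on the source distribution gives, with probability at least $1-\delta$, the uniform inequality $e_s(h) \le err_{\widehat{\mathcal{S}}}(h) + 2\mathfrak{R}_{n,\mathcal{S}}(\mathcal{H}) + \sqrt{\log(2/\delta)/(2n)}$ for all $h\in\mathcal{H}$, which contributes the $\mathfrak{R}_{n,\mathcal{S}}(\mathcal{H})$ piece of $\Omega$.

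Next, I would apply Proposition~\ref{prop2} twice: once on the source sample of size $n$ and once on the target sample of size $m$. Since that bound is uniform over $h,h'\in\mathcal{H}$, the slack terms do not depend on $h'$, so I can rearrange the two inequalities and add them to obtain, for every $h'$,
\begin{small}
\begin{equation*}
\begin{split}
dis_{\mathcal{T}}(h',h)-dis_{\mathcal{S}}(h',h) &\le dis_{\widehat{\mathcal{T}}}(h',h)-dis_{\widehat{\mathcal{S}}}(h',h) \\
&\quad+2\mathfrak{R}_{n,\mathcal{S}}(\mathcal{G}_\Gamma\mathcal{H})+2\mathfrak{R}_{m,\mathcal{T}}(\mathcal{G}_\Gamma\mathcal{H}) \\
&\quad+\sqrt{\log(2/\delta)/(2n)}+\sqrt{\log(2/\delta)/(2m)}.
\end{split}
\end{equation*}
\end{small}
Taking $\sup_{h'\in\mathcal{H}}$ on both sides and invoking the definition of the Determinacy Disparity Discrepancy yields $d_{h,\mathcal{H}}(\mathcal{S},\mathcal{T})\le d_{h,\mathcal{H}}(\widehat{\mathcal{S}},\widehat{\mathcal{T}})$ plus the same slack.

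Finally, I would substitute these two chains into Proposition~\ref{prop1}, merge the two copies of $\sqrt{\log(2/\delta)/(2n)}$ into a single $2\sqrt{\log(2/\delta)/(2n)}$, and apply a union bound across the three high-probability events to obtain the $1-3\delta$ guarantee with precisely the $\Omega$ stated. The most delicate step is passing $\sup_{h'}$ through the concentration inequality in the second stage; this is legitimate only because Proposition~\ref{prop2} is already uniform over $h,h'$, so the resulting slack is $h'$-independent and the supremum commutes with the addition. The remaining manipulations are routine algebraic rearrangement followed by the union bound.
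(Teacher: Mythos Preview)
Your proposal is correct and follows essentially the same strategy as the paper: start from Proposition~\ref{prop1}, replace $e_s(h)$ by $err_{\widehat{\mathcal{S}}}(h)$ via Lemma~\ref{lemma1}, replace $d_{h,\mathcal{H}}(\mathcal{S},\mathcal{T})$ by $d_{h,\mathcal{H}}(\widehat{\mathcal{S}},\widehat{\mathcal{T}})$ via two applications of Proposition~\ref{prop2}, and union-bound the three events. Your handling of the supremum step (adding the two uniform one-sided bounds before taking $\sup_{h'}$) is in fact slightly cleaner than the paper's chain of supremum inequalities, but the underlying argument and the resulting slack terms are identical.
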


Reviewing our approach, in essence, we minimize the first term $err_{\widehat{\mathcal{S}}}(h)$ in \textbf{Step 1}. Later, the min-max process including \textbf{Step 2} and \textbf{Step 3} implements the minimization of $d_{h,\mathcal{H}}(\widehat{\mathcal{S}},\widehat{\mathcal{T}})$, since $dis_{\mathcal{S}}(h',h)$ is close to zero after the first step. In summary, our method has a rigorous generalization upper bound which can guarantee the effectiveness of BCDM. Through the training process we can achieve the minimization of the major terms in the upper bound, by which the generalization of classifier on target domain will be improved significantly.

\section{Experiment}
\subsection{Dataset and Setup}
We evaluate BCDM against many state-of-the-art
% transfer learning and deep learning
algorithms on four domain adaptation datasets and two semantic segmentation datasets.
% in the {\color{blue} train}ing, validation and test sets;
\textbf{DomainNet~\cite{DomainNet}} is the largest and hardest dataset to date for visual domain adaptation and consists of about 0.6 million images with 345 classes that spread in 6 distinct domains: Clipart ({\color{blue} clp}), infograph ({\color{blue} inf}), Painting ({\color{blue} pnt}), Quickdraw ({\color{blue} qdr}), Real ({\color{blue} rel}) and Sketch ({\color{blue} skt});
\textbf{VisDA-2017~\cite{visda2017}} is a large-scale synthetic-to-real dataset contains over 280K images across 12 categories;
\textbf{Office-31~\cite{Office31}} is widely  adopted by adaptation methods involving three distinct domains: Amazon (A), DSLR (D) and Webcam (W);
\textbf{ImageCLEF\footnote{http://imageclef.org/2014/adaptation}} is composed of 12 common categories shared by three popular datasets: Caltech-256 (C), ImageNet ILSVRC2012 (I), and PASCAL VOC2012 (P); \textbf{Cityscapes~\cite{cityscapes}} is a real-word dataset with 5,000 urban scenes which are divided into training, validation and test sets; \textbf{GTA5~\cite{GTA5}} contains 24,966 synthesized frames captured from GTA5 game engine. Similar to~\cite{AdaSegNet,SWD}, we use the 19 classes in common with Cityscapes dataset and report the results on the Cityscapes validation set.
% We compare Bi-Classifier Diversity Maximization (BCDM) with state-of-the art domain adaptation approaches: deep adaptation network (DAN)~\cite{DAN}, domain adversarial neural network (DANN)~\cite{DANN}, adversarial discriminative domain adaptation (ADDA)~\cite{ADDA}, entropy minimization principle (MinEnt)~\cite{entropy_minimization}, maximize classifier discrepancy (MCD)~\cite{MCD}, sliced wasserstein distance (SWD)~\cite{SWD}, conditional domain adversarial network (CDAN)~\cite{CDAN}, adaptive feature norm (AFN)~\cite{AFN}, batch nuclear-norm maximization (BNM), AdaptSegNet~\cite{AdaSegNet} and category-level adversarial network (CLAN)~\cite{CLAN}.
Following the standard protocols in~\cite{CDAN,CLAN}, we use the average classification accuracy and PASCAL VOC intersection-over-union~\cite{everingham2015the}, i.e., IoU and mean IoU (mIoU) as evaluation metrics for image classification and semantic segmentation tasks respectively.

% DomainNet
\begin{table*}[!htbp]
  \centering
  \caption{Accuracy(\%) on \textbf{DomainNet} for unsupervised domain adaptation. In each sub-table, the column-wise domains are selected as the source domain and the row-wise domains are selected as the target domain. $\ddagger$: Results we obtained using the publicly released codes by the authors without any change and others are borrowed from~\cite{DomainNet}.}
 \resizebox{\textwidth}{!}{
  \setlength{\tabcolsep}{0.5mm}{
    \begin{tabular}{c|ccccccc||c|ccccccclc|ccccccc||c|ccccccc}
    \multicolumn{16}{c}{Accuracy(\%) on DomainNet for UDA (ResNet-50)} &
    & \multicolumn{16}{c}{Accuracy(\%) on DomainNet for UDA (ResNet-101)} \\
    \toprule
    ResNet$^{\ddagger}$ & {\color{blue} clp}  & {\color{blue} inf}    & {\color{blue} pnt}   & {\color{blue} qdr}   & {\color{blue} rel}   & {\color{blue} skt}   & Avg. & MCD$^{\ddagger}$ & {\color{blue} clp}   & {\color{blue} inf}   & {\color{blue} pnt}   & {\color{blue} qdr}   & {\color{blue} rel}   & {\color{blue} skt}   & Avg.  &       & ResNet & {\color{blue} clp}   & {\color{blue} inf}   & {\color{blue} pnt}   & {\color{blue} qdr}   & {\color{blue} rel}   & {\color{blue} skt}   & Avg.  &MCD   & {\color{blue} clp}   & {\color{blue} inf}   & {\color{blue} pnt}   & {\color{blue} qdr}   & {\color{blue} rel}   & {\color{blue} skt}   & Avg. \\
    \cline{1-16}\cline{18-33}  {\color{blue} clp}  & -  & 14.2 & 29.6 & 9.5 & 43.8 & 34.3 & 26.3 & {\color{blue} clp}   & - & 15.4  &  25.5 &  3.3 & 44.6  &  31.2 & 24.0  &    & {\color{blue} clp}   & -  & 19.3  & 37.5  & 11.1  & 52.2  & 41.0  & 32.2  & {\color{blue} clp}   & -     & 14.2  & 26.1  & 1.6   & 45.0  & 33.8  & 24.1 \\
    {\color{blue} inf}   & 21.8  & -  & 23.2 & 2.3   & 40.6  & 20.8  & 21.7  & {\color{blue} inf}   & 24.1 & - & 24.0  & 1.6   & 35.2  & 19.7  &  20.9 &       & {\color{blue} inf}   & 30.2  & -     & 31.2  & 3.6   & 44.0  & 27.9  & 27.4  & {\color{blue} inf}   & 23.6  & -     & 21.2  & 1.5   & 36.7  & 18.0  & 20.2 \\
    {\color{blue} pnt}   &  24.1 & 15.0 & -  & 4.6  &  45.0 &  29.0 &  23.5 & {\color{blue} pnt} & 31.1  & 14.8 & -  & 1.7 &  48.1 & 22.8 & 23.7  &       & {\color{blue} pnt}   & 39.6  & 18.7  & -     & 4.9   & 54.5  & 36.3  & 30.8  & {\color{blue} pnt}   & 34.4  & 14.8  & -     & 1.9   & 50.5  & 28.4  & 26.0 \\
    {\color{blue} qdr}   & 12.2 & 1.5   & 4.9 & -  & 5.6   & 5.7 & 6.0  & {\color{blue} qdr}   & 8.5 & 2.1 & 4.6 & -   & 7.9   & 7.1   & 6.0 &       & {\color{blue} qdr}   & 7.0   & 0.9   & 1.4   & -     & 4.1   & 8.3   & 4.3   & {\color{blue} qdr}   & 15.0  & 3.0   & 7.0   & -     & 11.5  & 10.2  & 9.3 \\
    {\color{blue} rel}   &  32.1 & 17.0  &  36.7 & 3.6   & -   & 26.2  & 23.1  & {\color{blue} rel}  &  39.4  & 17.8  &   41.2 & 1.5 & -  & 25.2  & 25.0 &   & {\color{blue} rel}   & 48.4  & 22.2  & 49.4  & 6.4   & -  & 38.8  & 33.0  & {\color{blue} rel}   & 42.6  & 19.6  & 42.6  & 2.2   & -     & 29.3  & 27.2 \\
    {\color{blue} skt}   &  30.4  & 11.3  & 27.8  & 3.4   & 32.9  & -  & 21.2  & {\color{blue} skt}  & 37.3 & 12.6  & 27.2  & 4.1   & 34.5  & -    & 23.1  &       & {\color{blue} skt}   & 46.9  & 15.4  & 37.0  & 10.9  & 47.0  & -     & 31.4  & {\color{blue} skt}   & 41.2  & 13.7  & 27.6  & 3.8   & 34.8  & -     & 24.2 \\
    Avg.  &  24.1 & 11.8 & 24.4 & 4.7 & 33.6  & 23.2  & 20.3 & Avg.  &  28.1 & 12.5 &  24.5 & 2.4 & 34.1 & 21.2 & 20.5 &       & Avg.  & 34.4  & 15.3  & 31.3  & 7.4   & 40.4  & 30.5  & 26.5  & Avg.  & 31.4  & 13.1  & 24.9  & 2.2   & 35.7  & 23.9  & 21.9 \\
    \cmidrule(lr){1-17}\cmidrule(lr){18-33}
    CDAN$^{\ddagger}$ & {\color{blue} clp}   & {\color{blue} inf}   & {\color{blue} pnt}   & {\color{blue} qdr}   & {\color{blue} rel}   & {\color{blue} skt}   & Avg.  & SWD$^{\ddagger}$ & {\color{blue} clp}   & {\color{blue} inf}   & {\color{blue} pnt}   & {\color{blue} qdr}   & {\color{blue} rel}   & {\color{blue} skt}   & Avg.  &       & CDAN$^{\ddagger}$  & {\color{blue} clp}   & {\color{blue} inf}   & {\color{blue} pnt}   & {\color{blue} qdr}   & {\color{blue} rel}   & {\color{blue} skt}   & Avg.  & SWD$^{\ddagger}$  & {\color{blue} clp}   & {\color{blue} inf}   & {\color{blue} pnt}   & {\color{blue} qdr}   & {\color{blue} rel}   & {\color{blue} skt}   & Avg. \\
    \cmidrule(lr){1-17}\cmidrule(lr){18-33}
    {\color{blue} clp}   & -  &  13.5 & 28.3 & 9.3 & 43.8 & 30.2 & 25.0  & {\color{blue} clp}   & -   & 14.7 & 31.9 & 10.1 & 45.3 & 36.5 & 27.7 & & {\color{blue} clp} & - & 17.8 & 35.7 & 15.3 & 51.3 & 37.2 & 31.4  & {\color{blue} clp}   & -     & 16.6 & 35.3 & 12.8  & 48.7 & 41.0 & 30.9 \\
    {\color{blue} inf}   &  18.9 & - & 21.4 & 1.9  & 36.3  & 21.3  & 20.0  & {\color{blue} inf}   & 22.9 & -  & 24.2 & 2.5 & 33.2 & 21.3 & 20.0 &  & {\color{blue} inf}   & 25.4 & -  & 28.9 & 5.8 & 38.2 & 22.8 & 24.2 & {\color{blue} inf}   & 26.9 & - & 27.6 & 2.7 & 38.1 & 25.4 & 24.1\\
    {\color{blue} pnt}  & 29.6  & 14.4 & - & 4.1 & 45.2 & 27.4 & 24.2 & {\color{blue} pnt}  & 33.6 & 15.3 & - & 4.4 & 46.1 & 30.7 & 26.0 &  & {\color{blue} pnt} & 37.1 & 17.9 & - & 7.9 & 51.4  & 34.0  & 29.7  & {\color{blue} pnt}   & 37.3 & 16.9 & - & 5.9 & 48.7 & 34.6 & 28.7 \\
    {\color{blue} qdr}  & 11.8  & 1.2   & 4.0  & -  & 9.4   & 9.5 & 7.2 & {\color{blue} qdr}   & 15.5 & 2.2 & 6.4 & - & 11.1 & 10.2 & 9.1 &  & {\color{blue} qdr}  & 20.5 & 2.3 & 7.7  & - & 14.6 & 12.6 & 11.5 & {\color{blue} qdr}   & 19.3 & 3.0 & 8.1 & - & 14.2 & 13.3 & 11.6 \\
    {\color{blue} rel}   & 36.4 & 18.3  & 40.9 & 3.4 & - & 24.6 & 24.7 & {\color{blue} rel}  & 41.2 & 18.1 & 44.2 & 4.6 & - & 31.6 & 27.9 &  & {\color{blue} rel}   & 43.6 & 19.4 & 46.1 & 8.3 & - & 33.2 & 30.1 & {\color{blue} rel} & 47.0 & 19.9 & 47.1 & 6.1 & - & 36.8 & 31.4 \\
    {\color{blue} skt}  & 38.2 & 14.7  & 33.9 & 7.0 & 36.6  & -  & 26.1 & {\color{blue} skt} & 44.2 & 15.2 & 37.3 & 10.3 & 44.7 & -  & 30.3  &       & {\color{blue} skt}   & 45.4 & 18.3 & 40.4 & 14.5 & 48.3 & -  &  33.4 & {\color{blue} skt} & 48.8 & 17.3 & 41.1 & 12.2 & 49.1 & - & 33.7 \\
    Avg.  & 27.0 & 12.4 & 25.7 & 5.1 & 34.3 & 22.6 & 21.2 & Avg.  & 31.5 & 13.1 & 28.8 & 6.4 & 36.1 & 26.1 & 23.6 &  & Avg.  & 34.4 & 15.1 & 31.7 & 10.4 & 40.8 & 27.9 & 26.7 & Avg. & 35.9 & 14.7 & 31.8 & 7.9 & 39.8 & 30.2 & 26.7 \\

\cmidrule(lr){1-17}\cmidrule(lr){18-33}
    BNM$^{\ddagger}$ & {\color{blue} clp}   & {\color{blue} inf}   & {\color{blue} pnt}   & {\color{blue} qdr}   & {\color{blue} rel}   & {\color{blue} skt}   & Avg.  & \textbf{BCDM} & {\color{blue} clp}   & {\color{blue} inf}   & {\color{blue} pnt}   & {\color{blue} qdr}   & {\color{blue} rel}   & {\color{blue} skt}   & Avg.  &       & BNM$^{\ddagger}$ & {\color{blue} clp}   & {\color{blue} inf}   & {\color{blue} pnt}   & {\color{blue} qdr}   & {\color{blue} rel}   & {\color{blue} skt}   & Avg.  &\textbf{BCDM} & {\color{blue} clp}   & {\color{blue} inf}   & {\color{blue} pnt}   & {\color{blue} qdr}   & {\color{blue} rel}   & {\color{blue} skt}   & Avg. \\
    \cmidrule(lr){1-17}\cmidrule(lr){18-33}
    {\color{blue} clp} & - & 12.1 & 33.1 & 6.2 & 50.8 & 40.2 & 28.5 & {\color{blue} clp} & - & 17.2 & 35.2 & 10.6 & 50.1 & 40.0 & 30.6 &  & {\color{blue} clp} & - & 19.4 & 35.6 & 16.1 & 49.8 & 36.3 & 31.4 & {\color{blue} clp} & - & 19.9 & 38.5 & 15.1 & 53.2 & 43.9 & 34.1  \\
    {\color{blue} inf} & 26.6 & -  & 28.5 & 2.4 & 38.5 & 18.1 & 22.8 & {\color{blue} inf} & 29.3 & - & 29.4 & 3.8  & 41.3  & 25.0  & 25.8 &   & {\color{blue} inf} & 24.6 & - & 27.8 & 7.9 & 35.0 & 22.0 & 23.5 & {\color{blue} inf} & 31.9 & - & 32.7  & 6.9 & 44.7  & 28.5 & 28.9  \\
    {\color{blue} pnt} & 39.9 & 12.2 & - & 3.4 & 54.5 & 36.2 & 29.2 & {\color{blue} pnt} & 39.2  & 17.7  & -  & 4.8  & 51.2 & 34.9 & 29.6 &  & {\color{blue} pnt} & 36.0 & 20.2 & - & 9.7 & 51.8 & 34.2 & 30.4 & {\color{blue} pnt} & 42.5 & 19.8  & - & 7.9  & 54.5 & 38.5  &  32.6 \\
    {\color{blue} qdr} & 17.8 & 1.0 & 3.6 & - & 9.2 & 8.3 & 8.0 & {\color{blue} qdr} & 19.4 & 2.6  & 7.2  & - & 13.6 & 12.8 & 11.1  &  & {\color{blue} qdr} & 21.3 & 3.8 & 10.5 & - & 14.0 & 12.9 & 12.5 & {\color{blue} qdr} & 23.0 & 4.0 & 9.5 & - & 16.9 &  16.2 & 13.9   \\
    {\color{blue} rel}   & 48.6 & 13.2 & 49.7 & 3.6 & -  & 33.9 & 29.8 & {\color{blue} rel}   & 48.2  & 21.5  & 48.3  & 5.4 & - & 36.7  & 32.0  &   & {\color{blue} rel} & 43.4 & 21.7 & 47.0 & 9.9 & - & 32.9 & 31.0 & {\color{blue} rel} & 51.9  & 24.9 & 51.2 & 8.7 & -  & 40.6 & 35.5  \\
    {\color{blue} skt} & 54.9 & 12.8 & 42.3 & 5.4 & 51.3 & - & 33.3 & {\color{blue} skt}   & 50.6  & 17.3  & 41.9  & 10.6  & 49.0  & - & 33.9  &  & {\color{blue} skt} & 43.1 & 19.1 & 39.5 & 15.6 & 47.0 & - & 32.7 & {\color{blue} skt} & 53.7 & 20.5 & 46.0 & 13.1 & 53.4 & -  &37.1    \\
    Avg.  & 37.6 & 10.3 & 31.4 & 4.2 & 40.9 & 27.3 &  25.3 & Avg.  & 37.3  & 15.3  & 32.4  & 7.0 & 41.0 & 29.9 & {\bf27.2}  &  & Avg.  & 33.7 & 16.8& 32.1 & 11.8 & 39.6 &  27.7 & 26.9 & Avg.  & 40.6 & 17.8 & 35.6 & 10.3 & 44.3 & 33.5 & \textbf{30.4}  \\
    \bottomrule
    \end{tabular}
    }
    }
\label{tab:domainnet}
\end{table*}

\subsection{Implementation Details}
For image classification, we use pre-trained ResNet-50/101~\cite{resnet} from ImageNet~\cite{deng2009imagenet} as the base feature extractor $G$ and replace the last fully-connected (FC) layer with a bottleneck layer as~\cite{CDAN}. The architecture of task-specific classifier is identical to~\cite{MCD}: three FC layers with random initialization is attached to the bottleneck layer. 
% Admittedly, the choice for the classifier is arbitrary and better adaptation performance may be obtained if this part of the architecture is tuned. 
The FC layers are trained from the scratch with learning rate 10 times that of the feature generator layers. We adopt Stochastic Gradient Descent optimizer (SGD)~\cite{SGD} with learning rate $3 \times 10^{-4}$, momentum 0.9 and weight decay $5 \times 10^{-4}$.  To schedule the learning rate, we follow the annealing procedure mentioned in~\cite{MADA}. Besides, we exploit the entropy loss as~\cite{MCD,RTN} to make the training procedure more stable in all experiment.

For semantic segmentation, we utilize the DeepLab-v2~\cite{chen2018deeplab} with ResNet-101 pre-trained on ImageNet as the base architecture $G$. Also, Atrous Spatial Pyramid Pooling (ASPP) is used for classifier and applied on the $conv5$ feature outputs. Following~\cite{AdaSegNet,CLAN}, sampling rates are fixed as \{6, 12, 18, 24\} and we modify the stride and dilation rate of the last layers to produce denser feature maps. To train the network, we use the SGD with Nesterov acceleration where initial learning rate is set as $2.5 \times 10^{-4}$ with momentum 0.9 and weight decay $10^{-4}$. And we use the polynomial annealing procedure as mentioned in~\cite{chen2018deeplab}. Regarding the training procedure, the network is first trained with $\mathcal{L}_{cls}$ for 20k iterations and then fine-tune using Algorithm~\ref{alg:CDD} for 40k iterations. %Some data augmentations are applied, such as color jittering and random flip to prevent overfitting.

% VisDA
\begin{table*}[htbp]
  \centering
  \caption{Accuracy(\%) on \textbf{VisDA-2017} for unsupervised DA (ResNet-101).}
    \resizebox{\textwidth}{!}{
    \begin{tabular}{c|cccccccccccc|c}
    \toprule
    Method & {\color{blue} plane} & {\color{blue} bcycl} & {\color{blue} bus}   & {\color{blue} car}   & {\color{blue} horse} & {\color{blue} knife} & {\color{blue} mcycl} & {\color{blue} person} & {\color{blue} plant} & {\color{blue} sktbrd} & {\color{blue} train} & {\color{blue} truck} & Avg. \\
    \hline
    ResNet~\cite{resnet} & 55.1  & 53.3  & 61.9  & 59.1  & 80.6  & 17.9  & 79.7  & 31.2  & 81.0  & 26.5  & 73.5  & 8.5   & 52.4  \\
    DAN~\cite{DAN} & 87.1 & 63.0  & 76.5  & 42.0  & 90.3 & 42.9  & 85.9  & 53.1  & 49.7  & 36.3  & 85.8 & 20.7  & 61.1  \\
    MinEnt~\cite{entropy_minimization} & 80.3 & 75.5 & 75.8 & 48.3 & 77.9 & 27.3 & 69.7 & 40.2 & 46.5 & 46.6 & 79.3 & 16.0 & 57.0 \\
    DANN~\cite{DANN} & 81.9  & 77.7 & 82.8  & 44.3  & 81.2  & 29.5  & 65.1  & 28.6  & 51.9  & 54.6 & 82.8  & 7.8   & 57.4  \\
    % ADR~\cite{ADR} & 87.8 & 79.5 & 83.7 & 65.3 & 92.3 & 61.8 & 88.9 & 73.2 & 87.8 & 60.0 & 85.5 & 32.3 & 74.8 \\
    CDAN~\cite{CDAN} & 85.2 & 66.9 & 83.0 & 50.8 & 84.2 & 74.9 & 88.1 &  74.5 & 83.4 & 76.0  &  81.9  &  38.0 & 73.9 \\
    JADA~\cite{JADA}  & 91.9  & 78.0 & 81.5  & 68.7  & 90.2  & 84.1  & 84.0  & 73.6  & 88.2  & 67.2 & 79.0  & 38.0  & 77.0  \\
    TPN~\cite{TPN} & 93.7 & 85.1 & 69.2 & 81.6 & 93.5 & 61.9 & 89.3 & 81.4 & 93.5 & 81.6 & 84.5 & \textbf{49.9} & 80.4 \\
    AFN~\cite{AFN} & 93.6 & 61.3 & 84.1 & 70.6 & \textbf{94.1} & 79.0 & \textbf{91.8} & 79.6 & 89.9 & 55.6 & 89.0 & 24.4 & 76.1 \\
    % BSP~\cite{BSP} &  92.4 & 61.0 & 81.0 & 57.5 & 89.0 & 80.6 & 90.1 & 77.0 & 84.2 & 77.9 & 82.1 & 38.4 & 75.9 \\
    BNM~\cite{BNM} & 89.6 & 61.5 & 76.9 & 55.0 & 89.3 & 69.1 & 81.3 & 65.5 & 90.0 & 47.3  & \textbf{89.1} & 30.1 & 70.4 \\
    \hline
    MCD~\cite{MCD} & 87.0  & 60.9  &  83.7  & 64.0  & 88.9  & 79.6  & 84.7  & 76.9 & 88.6 & 40.3  & 83.0  & 25.8 & 71.9  \\
    SWD~\cite{SWD}  &  90.8 &  82.5  &  81.7  &  70.5  &  91.7  &  69.5  &  86.3  &  77.5  &  87.4  &  63.6  &  85.6  &  29.2  &  76.4 \\
    STAR~\cite{STAR} & 95.0 & 84.0 & \textbf{84.6} & 73.0 & 91.6 & 91.8 & 85.9 & 78.4 & 94.4 & 84.7 & 87.0 & 42.2 & 82.7 \\
    \hline
    \textbf{BCDM} & \textbf{95.1}  & \textbf{87.6}  & 81.2 & \textbf{73.2} &  92.7  & \textbf{95.4} & 86.9 & \textbf{82.5}  & \textbf{95.1} & \textbf{84.8}  & 88.1  & 39.5  & \textbf{83.4} \\
    \bottomrule
    \end{tabular}
    }
  \label{tab:visda}

\end{table*}

% Office-31 + ImageCLEF
\begin{table*}[htbp] %\small
	\centering
	\caption{Classification Accuracy (\%) on \textbf{Office-31} and  \textbf{ImageCLEF} Datasets (ResNet-50).}
  \resizebox{\textwidth}{!}{
	\begin{tabular}{c|ccccccc|ccccccc}
		\toprule
        \multirow{2}{*}{Method}    & \multicolumn{7}{c|}{\textbf{Office-31}} & \multicolumn{7}{c}{\textbf{ImageCLEF}}  \\
		\cline{2-8}\cline{9-15} & {\color{blue} A $\rightarrow$ W}  &  {\color{blue} D $\rightarrow$ W}  &  {\color{blue} W $\rightarrow$ D}  &  {\color{blue} A $\rightarrow$ D}  &  {\color{blue} D $\rightarrow$ A}  &  {\color{blue} W $\rightarrow$ A}  & Avg. & {\color{blue} I $\rightarrow$ P} & {\color{blue} P $\rightarrow$ I} & {\color{blue} I $\rightarrow$ C} & {\color{blue} C $\rightarrow$ I} & {\color{blue} C $\rightarrow$ P} & {\color{blue} P $\rightarrow$ C} & Avg. \\
		\hline
    ResNet~\cite{resnet} & 68.4$\pm$0.2 & 96.7$\pm$0.1 & 99.3$\pm$0.1 & 68.9$\pm$0.2 & 62.5$\pm$0.3 & 60.7$\pm$0.3 & 76.1 & 74.8 & 83.9 & 91.5 & 78.0 & 65.5 & 91.2 & 80.7 \\
    DAN~\cite{DAN}   & 80.5$\pm$0.4 & 97.1$\pm$0.2 & 99.6$\pm$0.1 & 78.6$\pm$0.2 & 63.6$\pm$0.3 & 62.8$\pm$0.2 &  80.4 & 74.5 & 82.2 & 92.8 & 86.3 & 69.2 & 89.8 & 82.5 \\
    % RTN~\cite{RTN}   & 84.5$\pm$0.2 & 96.8$\pm$0.1 & 99.4$\pm$0.1 & 77.5$\pm$0.3 & 66.2$\pm$0.2 & 64.8$\pm$0.3 &  81.6 & 75.6 & 86.8 & 95.3 & 86.9 & 72.7 & 92.2 & 84.9 \\
    MinEnt~\cite{entropy_minimization} & 86.8$\pm$0.2 & \textbf{98.6$\pm$0.1} & \textbf{100.0$\pm$.0} & 88.7$\pm$0.3 & 67.2$\pm$0.5 & 63.4$\pm$0.4 & 84.1 & 76.2 & 85.7 & 93.5 & 83.5 & 69.3 & 89.7 & 83.0 \\
    DANN~\cite{DANN} & 82.0$\pm$0.4 & 96.9$\pm$0.2 & 99.1$\pm$0.1 & 79.7$\pm$0.4 & 68.2$\pm$0.4 & 67.4$\pm$0.5 &  82.2 & 75.0 & 86.0 & 96.2 & 87.0 & 74.3 & 91.5 & 85.0 \\
    % JAN~\cite{JAN}   & 85.4$\pm$0.3 & 97.4$\pm$0.2 & 99.8$\pm$0.2 & 84.7$\pm$0.3 & 68.6$\pm$0.3 & 70.0$\pm$0.4 &  84.3 & 76.8 & 88.0 & 94.7 & 89.5 & 74.2 & 91.7 & 85.8 \\
    CDAN~\cite{CDAN} & 94.1$\pm$0.1 &  98.6$\pm$0.1 & \textbf{100.0$\pm$.0} & 92.9$\pm$0.2 & 71.0$\pm$0.3 & 69.3$\pm$0.3 & 87.7 & 77.7 & 90.7 & \textbf{97.7} & 91.3 & 74.2 & 94.3 & 87.7 \\
    JADA~\cite{JADA} & 90.5$\pm$0.1 & 97.5$\pm$0.3 & \textbf{100.0$\pm$.0} & 88.2$\pm$0.1 & 70.9$\pm$0.5 & 70.6$\pm$0.4 &  86.1  & 78.2 & 90.1 & 95.9 & 90.8 & 76.8 & 94.1 & 87.7 \\
    % TAT~\cite{TAT} & 92.5$\pm$0.3 & \textbf{99.3$\pm$0.1} & \textbf{100.0$\pm$.0} & 93.2$\pm$0.2 & 73.1$\pm$0.3 & 72.1$\pm$0.3 & 88.4 & 78.8 & 92.0 & 97.5 & \textbf{92.0} & 78.2 & 94.7 & 88.9 \\
    AFN~\cite{AFN} & 90.1$\pm$0.8 & 98.6$\pm$0.2 & 99.8$\pm$.0 & 90.7$\pm$0.5 & 73.0$\pm$0.2 & 70.2$\pm$0.3 & 87.1 & 79.3 & \textbf{93.3} & 96.3 & \textbf{91.7} & 77.6 & 95.3 & 88.9 \\
    BNM~\cite{BNM} & 91.5$\pm$.0 & 98.5$\pm$.0 & \textbf{100.0$\pm$.0}  & 90.3$\pm$.0 & 70.9$\pm$.0 & 71.6$\pm$.0 & 87.1 & 77.2 & 91.2 & 96.2 &  \textbf{91.7} & 75.7 & \textbf{96.7} & 88.1 \\
   \hline
   MCD~\cite{MCD}  & 88.6$\pm$0.2 & 98.5$\pm$0.1 & \textbf{100.0$\pm$.0} & 92.2$\pm$0.2 & 69.5$\pm$0.1 & 69.7$\pm$0.3 &  86.5 & 77.3 & 89.2 & 92.7 & 88.2 & 71.0 & 92.3 & 85.1 \\
   SWD~\cite{SWD} & 90.4$\pm$0.4 & \textbf{98.7$\pm$0.2} &  \textbf{100.0$\pm$.0} & \textbf{94.7$\pm$0.4} & 70.3$\pm$0.2 & 70.5$\pm$0.5 & 87.4 & 78.1 & 89.6 & 95.2 & 89.3  & 73.4 & 92.8 & 86.4 \\
   STAR~\cite{STAR} & 92.6$\pm$0.1 & \textbf{98.7$\pm$0.1} & \textbf{100.0$\pm$.0} & 93.2$\pm$0.1 & 71.4$\pm$0.3 & 70.8$\pm$0.1 & 87.8  & 78.8 & 90.5 & 96.2 & 91.2 & 75.5 & 93.8 & 87.7\\
   \hline
   \textbf{BCDM}    & \textbf{95.4$\pm$0.3} & 98.6$\pm$0.1 & \textbf{100.0$\pm$.0} &  93.8$\pm$0.3  &  \textbf{73.1$\pm$0.3}  &  \textbf{73.0$\pm$0.2}  &  \textbf{89.0} & \textbf{79.5} &  93.2 & 96.8 & 91.3 & \textbf{78.9} & 95.8 & \textbf{89.3} \\
		\bottomrule
	\end{tabular}
    }
  \label{tab:office31-clef}
\end{table*}

\begin{table*}[!htbp]
  \centering
  \caption{Semantic segmentation performance mIoU (\%) on \textbf{Cityscapes} validation set.}
  \resizebox{\textwidth}{!}{
  \begin{tabular}{c|ccccccccccccccccccc|c}
    \toprule
    Method & \rotatebox{60}{{\color{blue} road}} & \rotatebox{60}{{\color{blue} side.}} & \rotatebox{60}{{\color{blue} buil.}} & \rotatebox{60}{{\color{blue} wall}} & \rotatebox{60}{{\color{blue} fence}} & \rotatebox{60}{{\color{blue} pole}} & \rotatebox{60}{{\color{blue} light}} & \rotatebox{60}{{\color{blue} sign}} & \rotatebox{60}{{\color{blue} veg}} & \rotatebox{60}{{\color{blue} terr.}} & \rotatebox{60}{{\color{blue} sky}} & \rotatebox{60}{{\color{blue} pers.}} & \rotatebox{60}{{\color{blue} rider}} & \rotatebox{60}{{\color{blue} car}} & \rotatebox{60}{{\color{blue} truck}} & \rotatebox{60}{{\color{blue} bus}} & \rotatebox{60}{{\color{blue} train}} & \rotatebox{60}{{\color{blue} mbike}} & \rotatebox{60}{{\color{blue} bike}} & mIoU \\
    % \hline
    % \hline
    
    \hline
    NoAdaptation (ResNet-101)  & 75.8 & 16.8 & 77.2 & 12.5 & 21.0 & 25.5 & 30.1 & 20.1 & 81.3 & 24.6 & 70.3 & 53.8 & 26.4 & 49.9 & 17.2 & 25.9 & 6.5 & 25.3 & \textbf{36.0} & 36.6 \\
    AdaSegNet~\cite{AdaSegNet} & 86.5 & 36.0 & 79.9 & 23.4 & 23.3 & 23.9 & 35.2 & 14.8 & 83.4 & 33.3 & 75.6 & 58.5 & 27.6 & 73.7 & 32.5 & 35.4 & 3.9 & 30.1 & 28.1 & 42.4 \\
    AdvEnt~\cite{AdvEnt} & 89.9 & 36.5 & 81.6 & 29.2 & 25.2 & 28.5 & 32.3 & 22.4 & 83.9 & 34.0 & 77.1 & 57.4 & 27.9 & 83.7 & 29.4 & 39.1 & 1.5 & 28.4 & 23.3 & 43.8 \\
    SWD~\cite{SWD} & \textbf{92.0} & \textbf{46.4} & 82.4 & 24.8 & 24.0 & \textbf{35.1} & 33.4 & \textbf{34.2} & 83.6 & 30.4 & 80.9 & 56.9 & 21.9 & 82.0 & 24.4 & 28.7 & 6.1 & 25.0 & 33.6 & 44.5 \\
    CLAN~\cite{CLAN} & 87.0 & 27.1 & 79.6 & 27.3 & 23.3 & 28.3 & 35.5 & 24.2 & 83.6 & 27.4 & 74.2 & 58.6 & 28.0 & 76.2 & \textbf{33.1} & 36.7 & 6.7 & \textbf{31.9} & 31.4 & 43.2 \\
    STAR~\cite{STAR} & 88.4 & 27.9 & 80.8 & 27.3 & \textbf{25.6} & 26.9 & 31.6 & 20.8 & 83.5 & 34.1 & 76.6 & \textbf{60.5} & 27.2 & 84.2 & 32.9 & 38.2 & 1.0 & 30.2 & 31.2 & 43.6 \\
    \hline
    \textbf{BCDM} & 90.5 & 37.3 & \textbf{83.7} & \textbf{39.2} & 22.2 & 28.5 & \textbf{36.0} & 17.0 & \textbf{84.2} & \textbf{35.9} & \textbf{85.8} & 59.1 & \textbf{35.5} & \textbf{85.2} & 31.1 & \textbf{39.3} & \textbf{21.1} & 26.7 & 27.5 & \textbf{46.6} \\
    \bottomrule
  \end{tabular}
  }
  \label{tab:seg}
\end{table*}

% segmentation visulization
\begin{figure*}[!htbp]
  \centering  
  \includegraphics[width=0.98\textwidth]{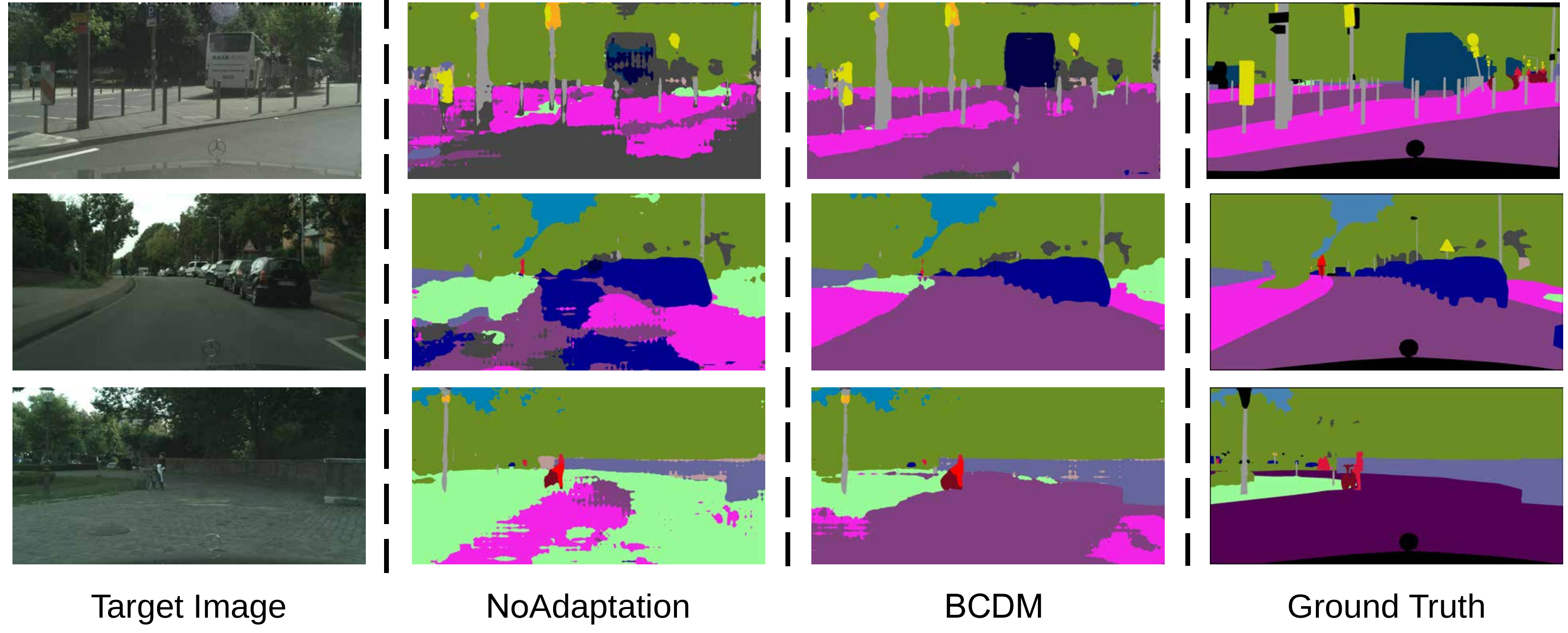}
  \caption{Qualitative examples of adaptation segmentation results for GTA5 $\rightarrow$ Cityscapes setting. For each target image, we show results before adaptation, the proposed method, and ground truth.}
  \label{Fig_seg_visulation_1}
\end{figure*}

\subsection{Experimental Results}

\textbf{DomainNet.} As illustrated in Table~\ref{tab:domainnet}, our BCDM significantly outperforms the comparison methods (i.e., CDAN, BNM, SWD, MCD) in terms of final mean accuracy. Note that the mainstream domain adaptation baseline, i.e., MCD, suffers from negative transfer~\cite{survey}. An interpretation is that the MCD model is category agnostic and will bring side effects of the deterioration of the representation discriminability, especially when there exist notable domain gaps and a large number of categories on this dataset.
In spite of that, our method still yields large improvement over other baselines for the scenarios of using both ResNet-50/101 backbones, which highlights its superiority in aligning distinct domains with large class variation and its suitability for this imbalanced dataset~\cite{DomainNet}.

\textbf{VisDA-2017.} Table~\ref{tab:visda} compares various methods with the pretrained ResNet-101 backbone. Clearly, we outperform other approaches by large margins on 7 out of 12 tasks. Furthermore, our BCDM brings a rise of improvement by $\textbf{31.0}\%$ over the source only model (ResNet-101). Compared with MCD and SWD, which explicitly utilize adversarial learning with bi-classifier to align features across source domain and target domain, our model achieves extra gains of $\textbf{11.5}\%$ and $\textbf{7.0}\%$ respectively. Besides, the BCDM surpasses them on bicycle, knife, and sktbrd classes by large margins. Moreover, we achieve decent gains compared to the best baseline STAR. The considerable results reflect that the proposed classifier determinacy disparity metric is effective in associating two different but related distributions of data.

\textbf{Office-31 and ImageCLEF.} Experimental results are reported in Table~\ref{tab:office31-clef}. On both datasets, our BCDM obtains superior results over other popular adaptation methods and achieves the best average accuracies ($\textbf{89.0}\%$ on Office-31 and $\textbf{89.3}\%$ on ImageCLEF). The results show that BCDM is beneficial to promote the adaptation capability, especially on the difficult scenarios where the baseline accuracy is relatively low, e.g., D $\rightarrow$ A, W $\rightarrow$ A and C $\rightarrow$ P. 

\textbf{GTA5 $\rightarrow$ Cityscapes.}
Unlike image classification, segmentation task generally requires more effort of human labor to annotate each pixel in an image. Here we extend our BCDM to perform domain adaptive  semantic segmentation task. We present quantitative results of transferring GTA5
to Cityscapes in Table~\ref{tab:seg} with the best results highlighted in bold. Even with a large domain shift between synthetic-to-real scenes, the model trained on our BCDM has been shown to outperform the model trained on source only (NoAdaptation) by a significant gain of $\textbf{10.0}\%$. Additionally, our method consistently outperforms other recent approaches that utilize bi-classifier~\cite{SWD,CLAN,STAR}. We also provide more qualitative examples in Fig.~\ref{Fig_seg_visulation_1}, where our method produces less noisy regions and yields good segmentation predictions with more details.

% moon
\begin{figure*}[!htbp]
  \centering  
    \subfigure[ResNet-50]{
      \label{Fig_moon_res}
      \includegraphics[width=0.23\textwidth]{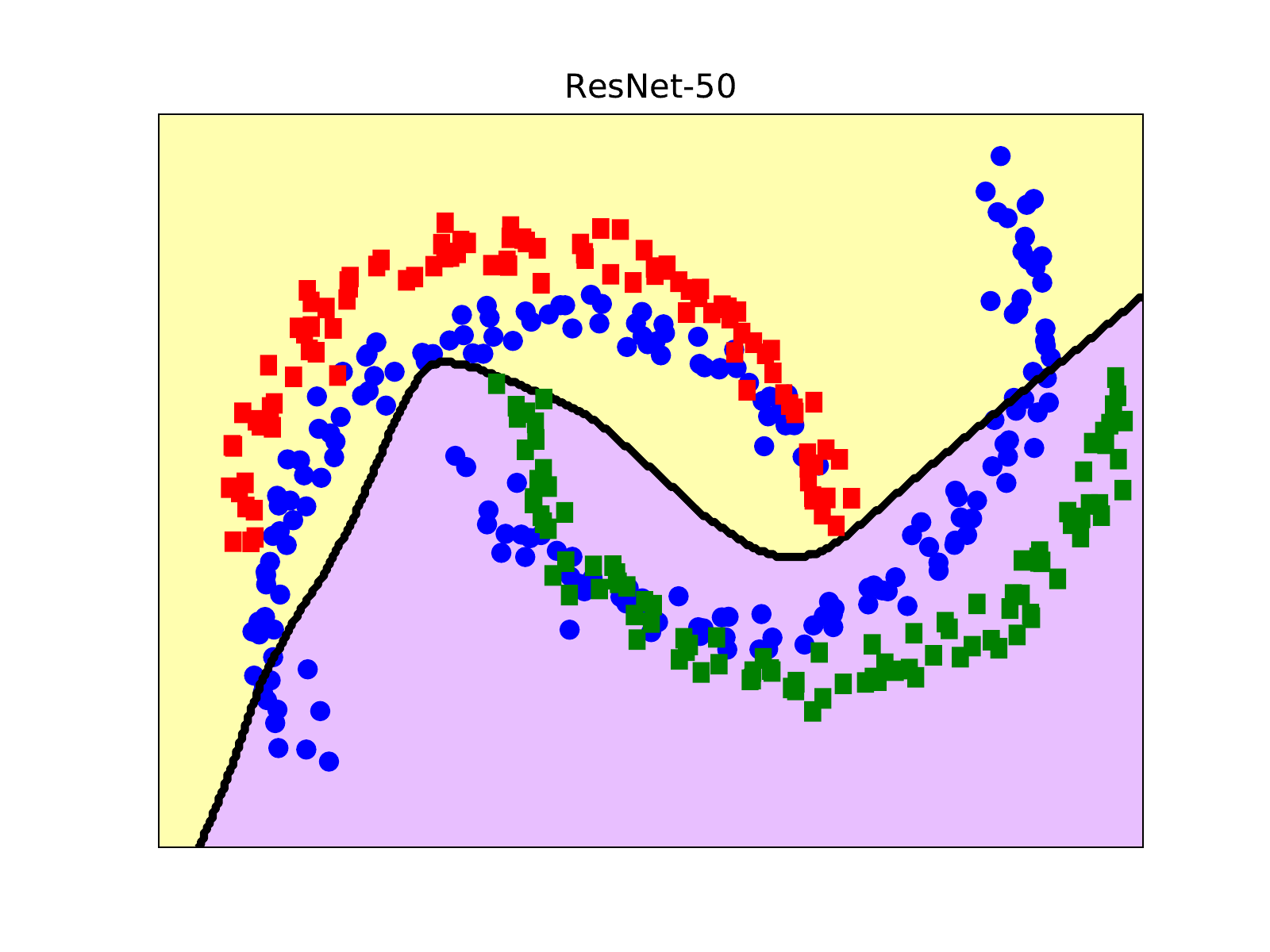}
    }
    \subfigure[MCD]{
      \label{Fig_moon_mcd}
      \includegraphics[width=0.23\textwidth]{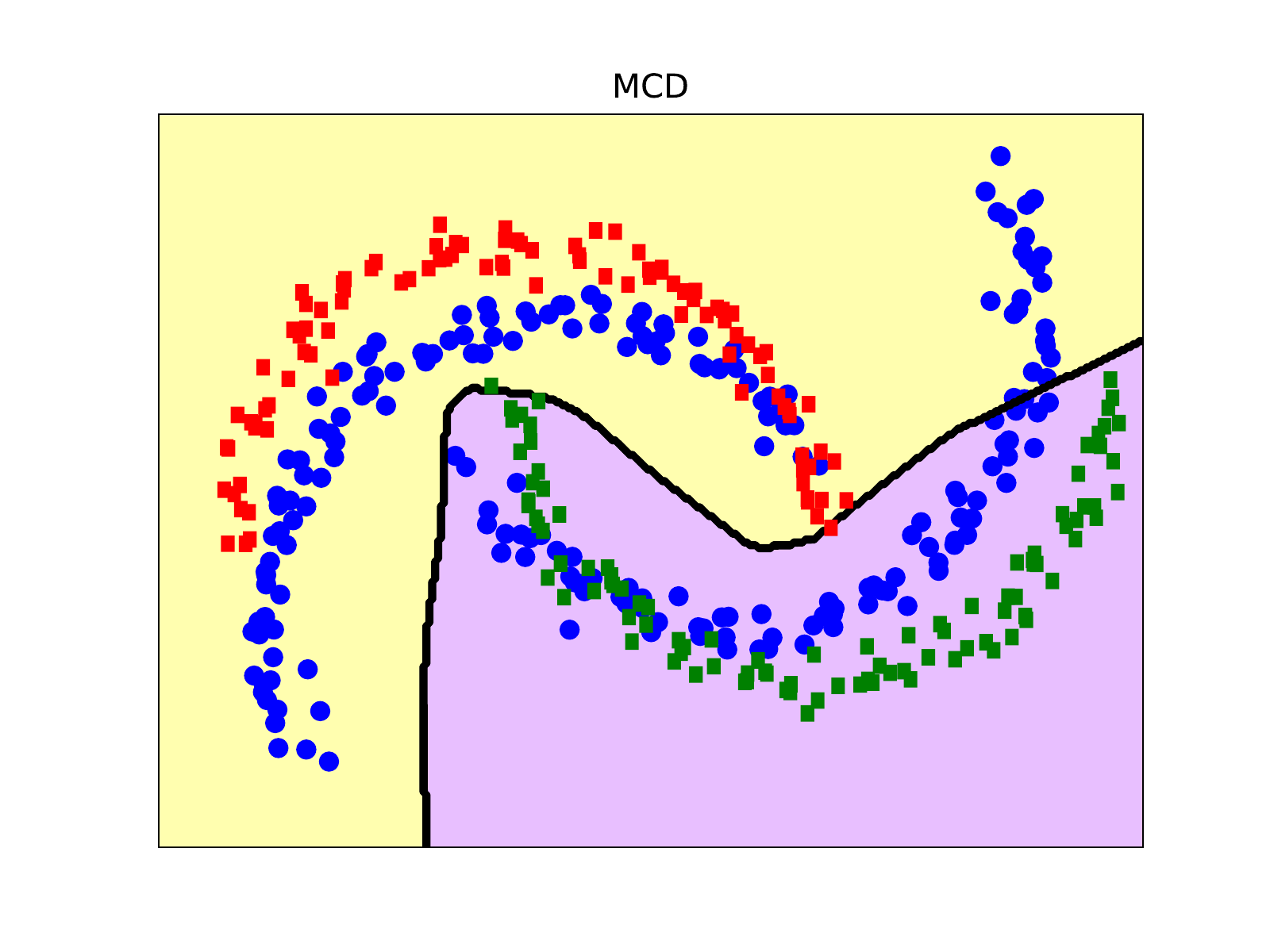}
    }
    \subfigure[SWD]{
      \label{Fig_moon_swd}
      \includegraphics[width=0.23\textwidth]{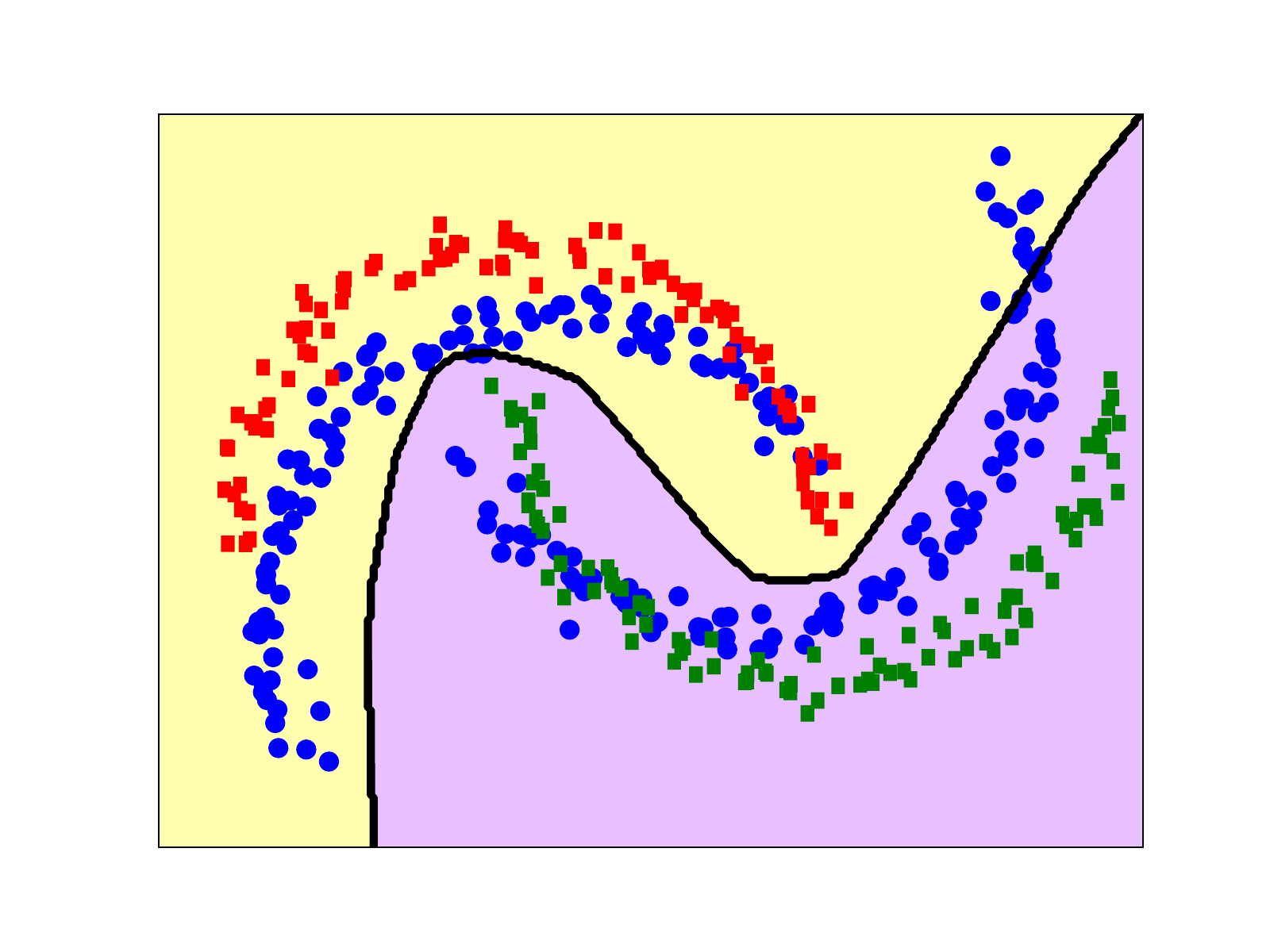}
    }
    \subfigure[BCDM]{
      \label{Fig_moon_bcdm}
      \includegraphics[width=0.23\textwidth]{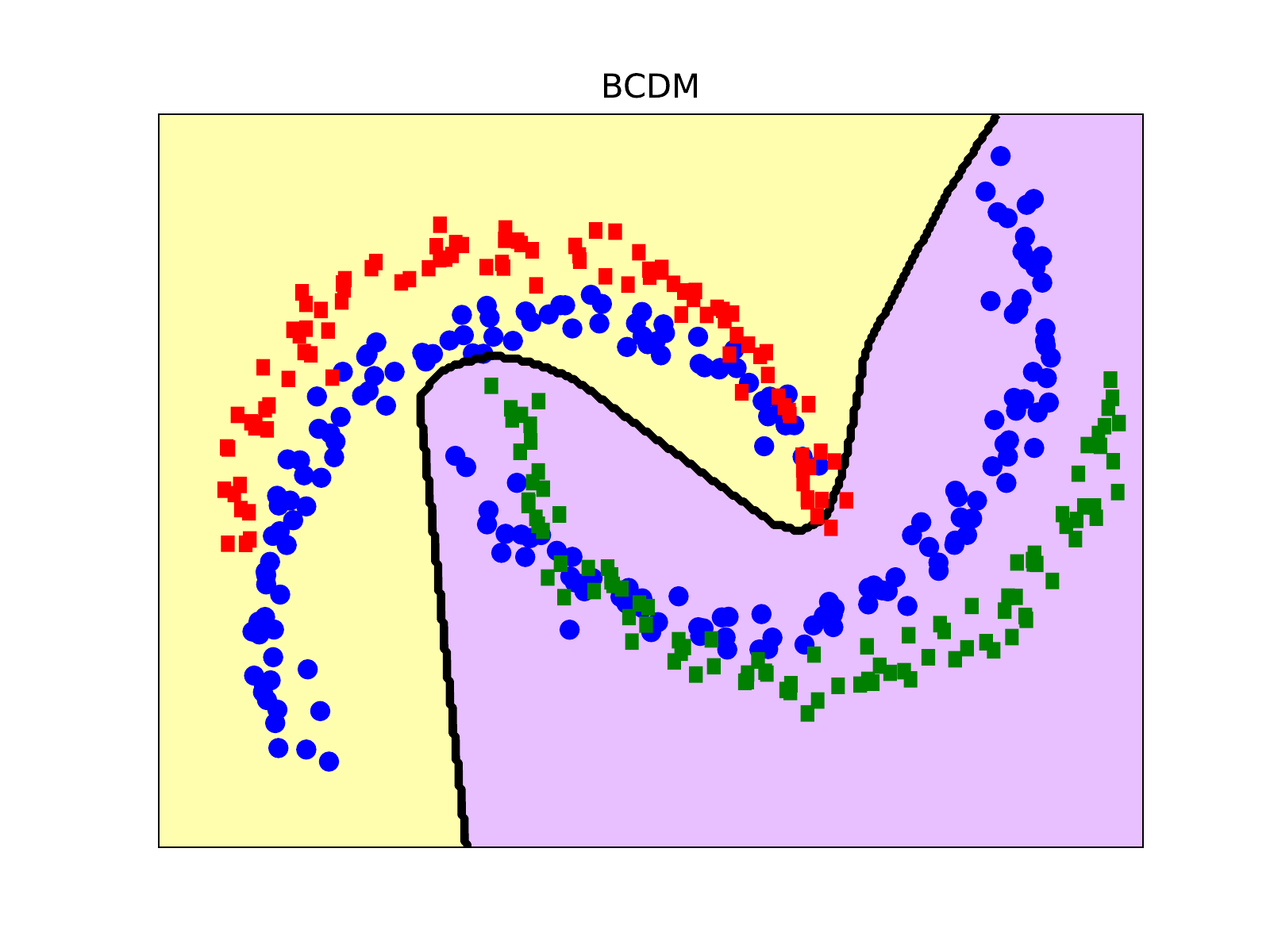}
    }
  \caption{Comparisons of four decision boundaries on a toy dataset. Red points and green points denote the class 0 and 1 of source data, respectively. Blue points are target data generated from the same distribution as the source data but with domain shift by rotation. The yellow and magenta regions are classified as class 0 and 1 by the final decision boundary, respectively.}
  \label{Fig_moons}
\end{figure*}

\begin{figure*}[!htbp]
  \centering  
    \subfigure[$\A$-distance]{
      \label{Fig_a_distance}
      \includegraphics[width=0.19\textwidth]{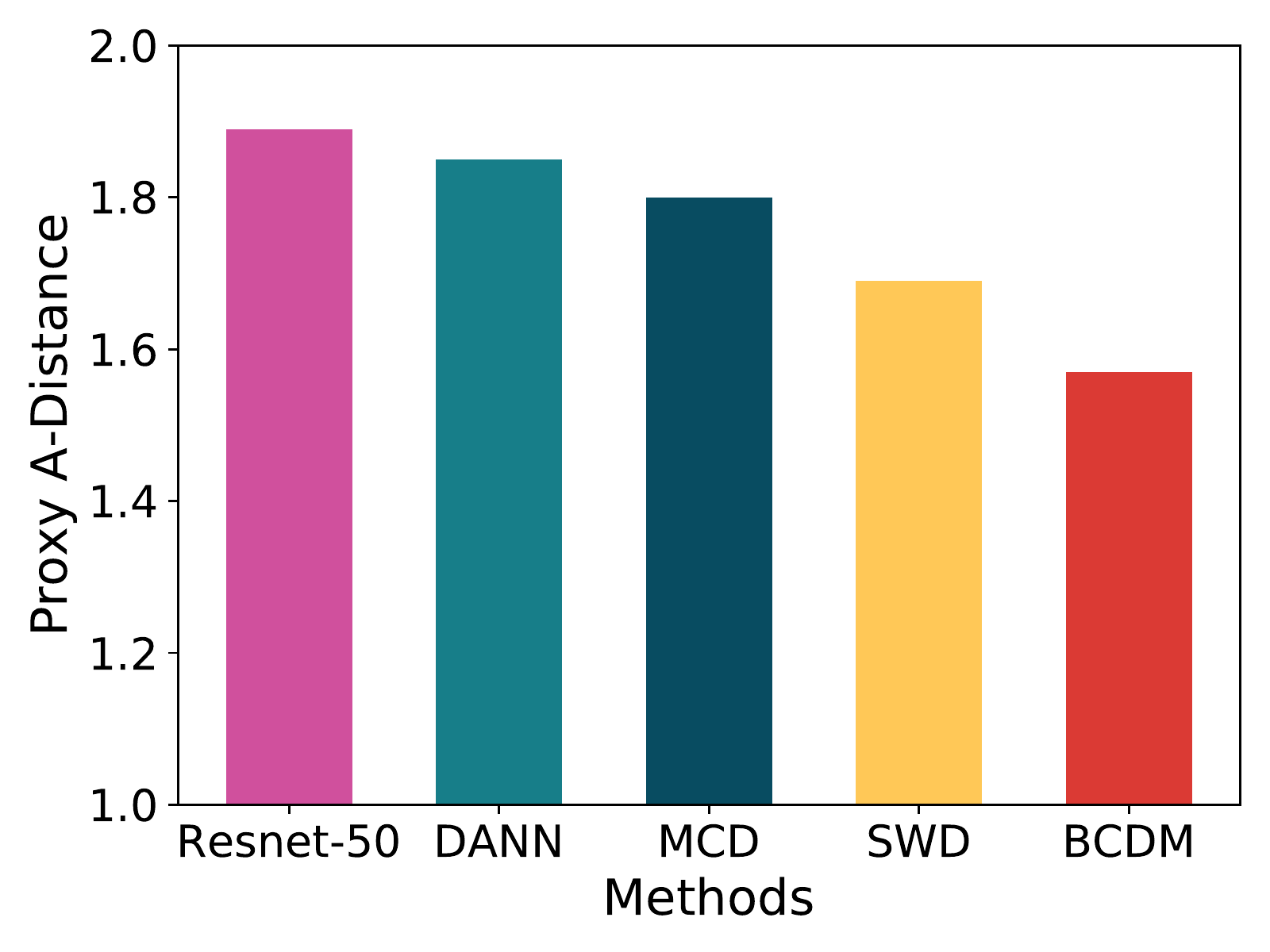}
    }
    \subfigure[Convergence]{
      \label{Fig_convergence_office}
      \includegraphics[width=0.185\textwidth]{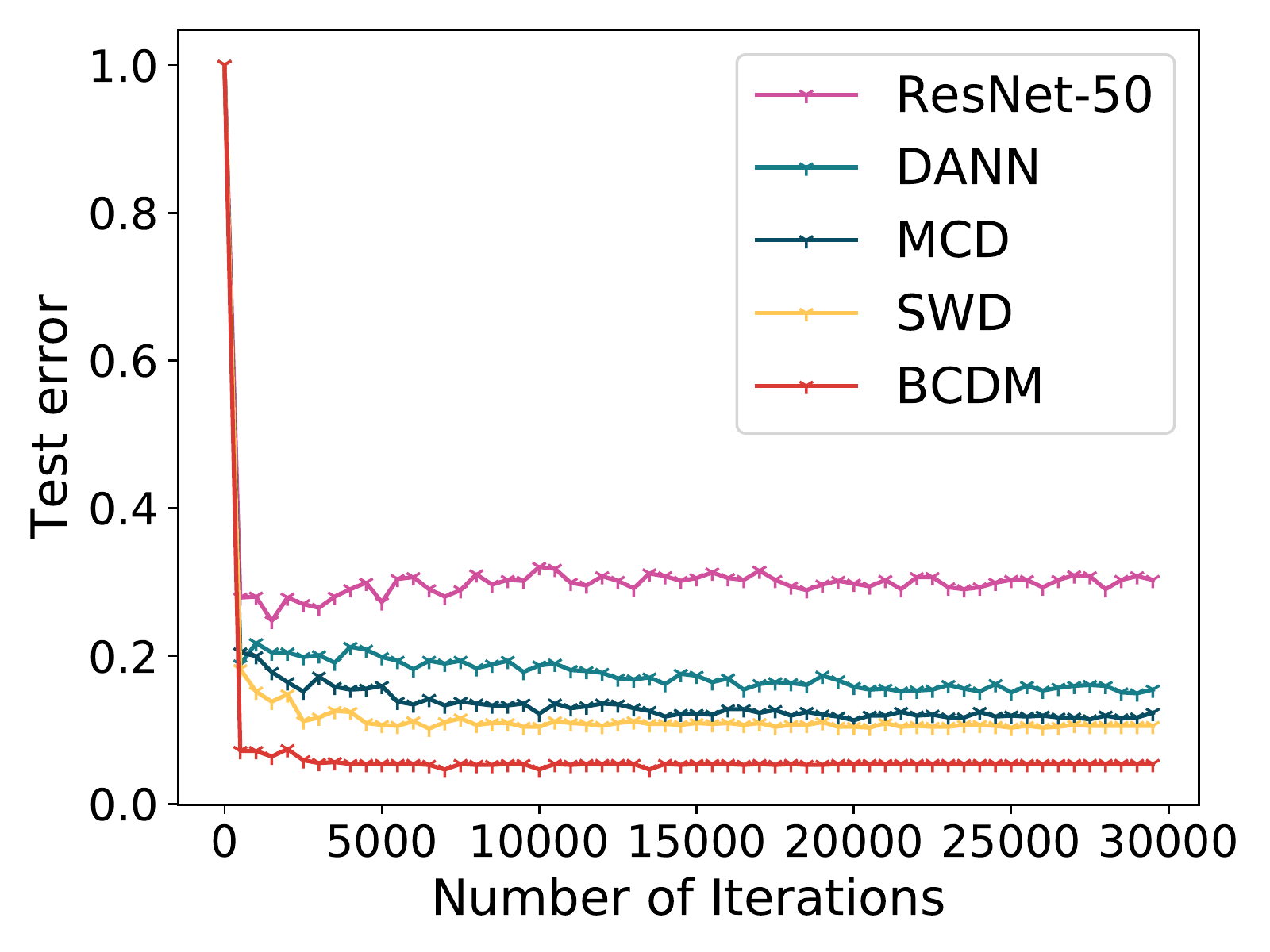}
    }
    \subfigure[SVD analysis]{
      \label{Fig_svd_analysis}
      \includegraphics[width=0.185\textwidth]{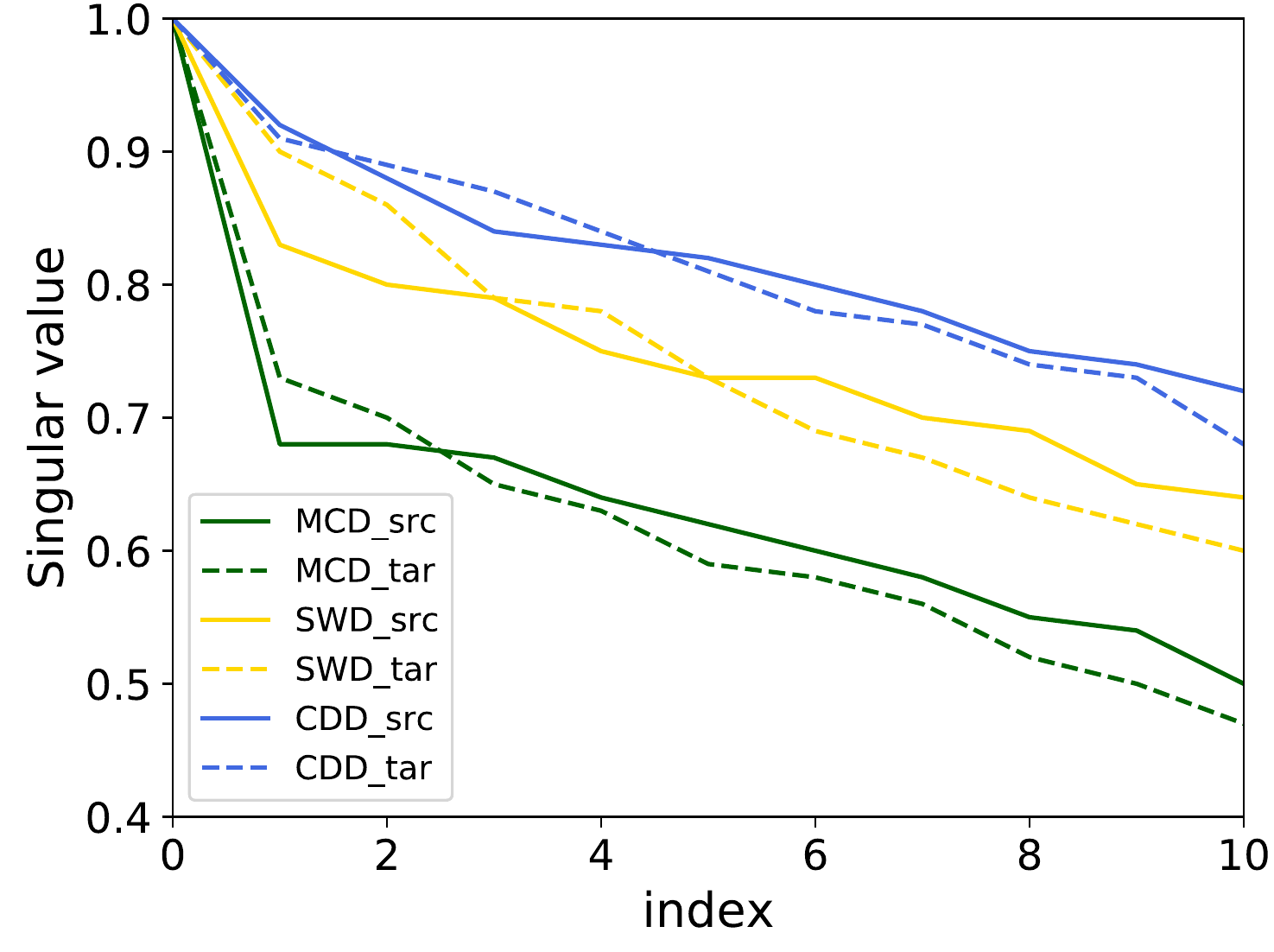}
    }
    \subfigure[MCD]{
      \label{Fig_tsne_mcd}
      \includegraphics[width=0.185\textwidth]{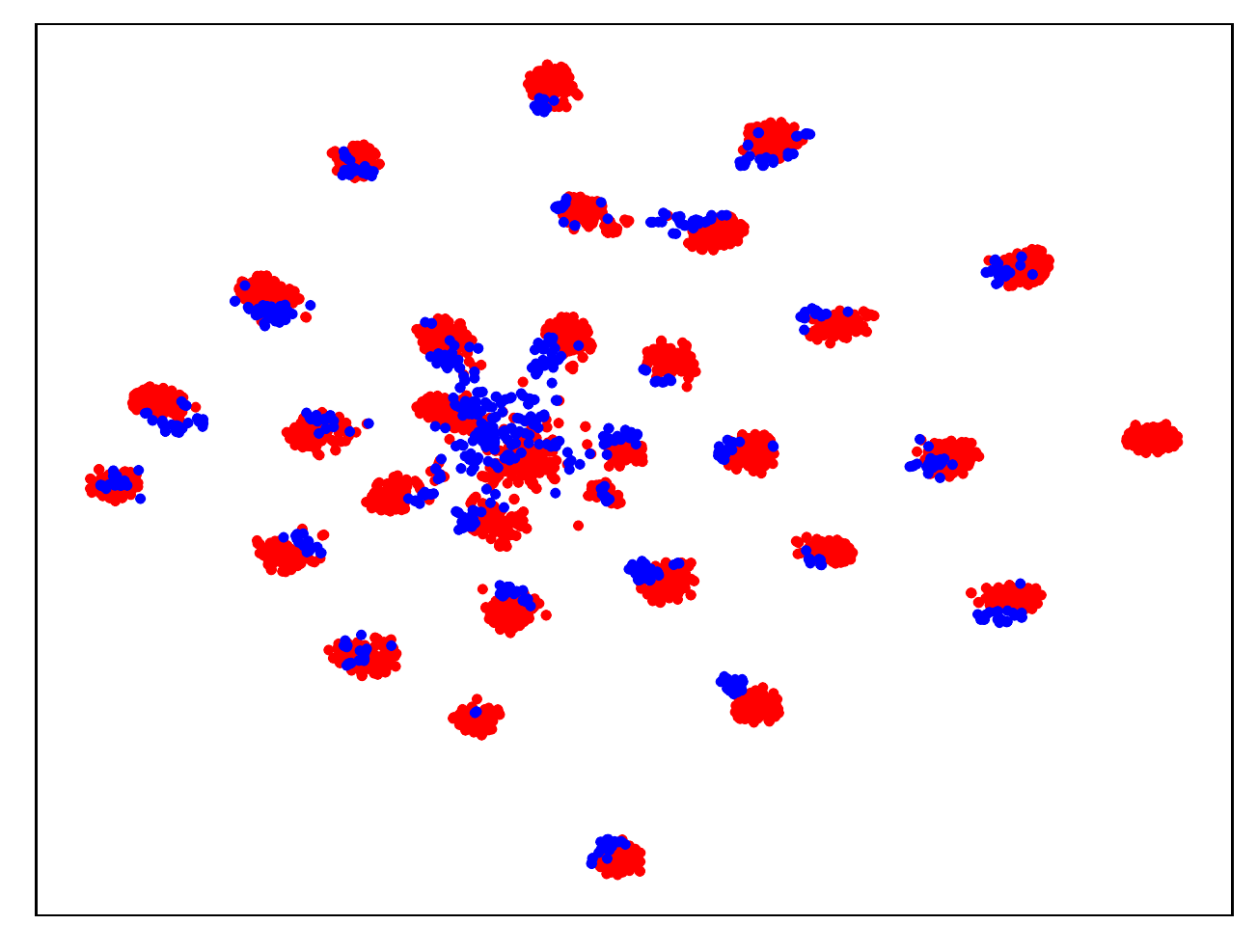}
    }
    \subfigure[BCDM]{
      \label{Fig_tsne_bcdm}
      \includegraphics[width=0.185\textwidth]{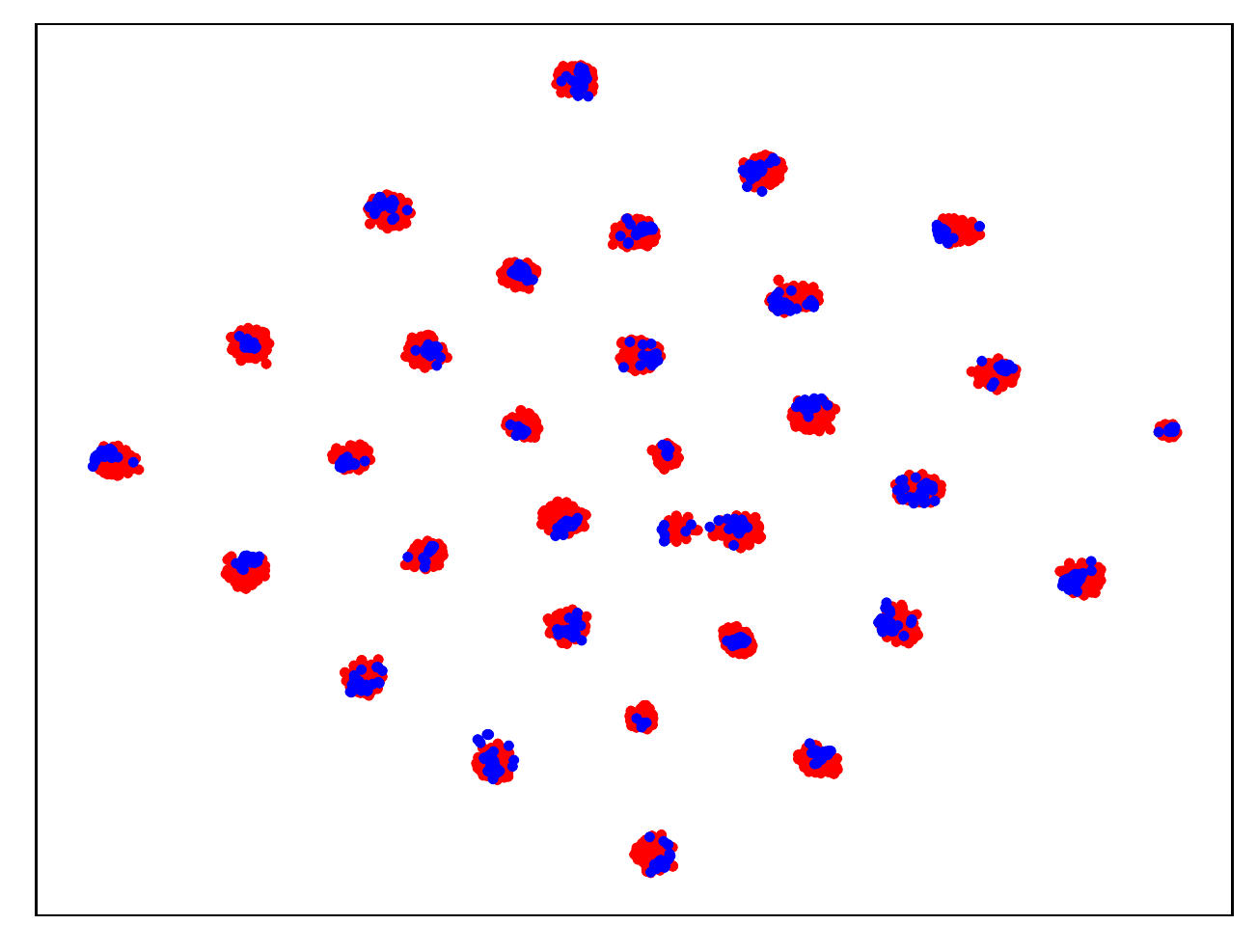}
    }
  \caption{Visualization of $\A$-distance, convergence, SVD analysis, t-SNE on task A (red) $\rightarrow$ W (blue) of \textbf{Office-31}.}
\end{figure*}
\subsection{Insight Analysis}

\textbf{Toy Experiment.}
We perform a toy experiment on the inter twinning moons 2D problems~\cite{pedregosa2011scikit-learn} to provide analysis on the learned decision boundaries of ResNet-50, MCD, SWD and BCDM. As indicated in Fig.~\ref{Fig_moons}, we generate two classes of source samples, one is an upper moon, the other is a lower moon, labeled 0 and 1, respectively. For the target samples, we enhance domain gap by rotating 30 degrees of the distribution of the source samples to generate them. 
\begin{table}[htbp]
	\centering
  \caption{Generalized Results (\%) on \textbf{Office-31} Dataset.}
  \resizebox{0.48\textwidth}{!}{
  \begin{tabular}{l|c|cccccc|c|c}
    \toprule
	  Method &Metric& {\color{blue} A $\rightarrow$ W}  &  {\color{blue} D $\rightarrow$ W}  &  {\color{blue} W $\rightarrow$ D}  &  {\color{blue} A $\rightarrow$ D}  &  {\color{blue} D $\rightarrow$ A}  &  {\color{blue} W $\rightarrow$ A}  & Avg. &$\varDelta$ \\
		\hline
    STAR  &$\ell_1$& 92.6& \textbf{98.7} & \textbf{100.0} & 93.2 & 71.4 & 70.8 & 87.8& - \\
    + SWD& $swd$ & 93.5 & \textbf{98.7} & \textbf{100.0} & 93.4 & 72.1 & 71.7 & 88.2 & 0.4 $\uparrow$ \\
    + BCDM &$cdd$& \textbf{95.1} & \textbf{98.7} & \textbf{100.0} & \textbf{94.1} & \textbf{73.5} & \textbf{73.7} & \textbf{89.2} & \textbf{1.4} $\uparrow$\\
		\bottomrule
	\end{tabular}
    }
  \label{tab:generalized_results}
\end{table}
We generate 100 samples per class as the training samples. Both feature generator and classifiers employ a 3-layered fully-connected network. After convergence, ResNet-50 (Fig.~\ref{Fig_moon_res}) correctly classifies the source samples under the supervision of source labels but does badly in target samples. MCD (Fig~\ref{Fig_moon_mcd}) is able to reduce the distribution shift between domains to some extent, but it is not good enough to perform properly in the whole target samples. SWD (Fig.~\ref{Fig_moon_swd} is capable of perfectly aligning its decision boundary in yellow region (class 0) but fails in magenta region (class 1). In contrast, BCDM (Fig.~\ref{Fig_moon_bcdm}) achieves the best matching to the target domain samples and obtains the right decision boundary for all classes.

\textbf{Generalization Results.}
To further demonstrate the generalization of our metric, classifier determinacy disparity (CDD) can be used as an additive module to the latest bi-classifier UDA method STAR~\cite{STAR}, and we compare its performance against $\ell_1$ and Wasserstein metric (SWD). As shown in Table~\ref{tab:generalized_results}, CDD achieves the best performance compared to  $\ell_1$ and SWD. Specifically, our proposed metric improves the baseline approach by up to 1.4\%.

\textbf{Proxy $\A$-distance.}
As shown in Fig.~\ref{Fig_a_distance}, we compute the proxy $\A$-distance of ResNet-50, DANN, MCD, SWD and BCDM task-specific feature representations for task A~$\rightarrow$~W of Office-31. We observe that BCDM has the smallest $\A$-distance among the compared approaches
which justifies that the representations of BCDM are more indistinguishable and can reduce the domain divergence effectively.

\textbf{Convergence.}
We present the test error convergence curve with respect to the number of iterations on task A~$\rightarrow$~W as shown in Fig~\ref{Fig_convergence_office}. It can be observed that BCDM achieves comparable convergence with smaller errors.

\textbf{SVD analysis.} We plot the singular values(max-normalized) of features extracted by MCD, SWD and BCDM in Fig.~\ref{Fig_svd_analysis}. BCDM keeps high values while successfully reducing the big difference between the largest and the rest. It implies that more dimensions corresponding to smaller
singular values pose positive influence on classification, which intuitively boosts the feature discriminability.

\textbf{t-SNE Visualization.} We visualize features of the last $fc$-layer of ResNet backbone using t-SNE~\cite{tsne} in Fig.~\ref{Fig_tsne_mcd}-\ref{Fig_tsne_bcdm}. The target data are not aligned well with source data using MCD, while BCDM can learn highly discriminative features and keep clear class boundaries.

\section{Conclusion}

In this paper, we investigate that the representation discriminability and classifier determinacy imply more transferability, which is a general exploration for unsupervised domain adaptation. 
% Whereas previous researches concentrate more on learning invariant representations which eventually destroy the discriminability of learned representations. 
Accordingly, we introduce a new metric, classifier determinacy disparity (CDD), to felicitously evaluate the discrepancy across distinct target predictions. With the merits of CDD and its theoretical guarantees, we propose Bi-Classifier Determinacy Maximization (BCDM) to adversarially optimize the CDD loss, achieving  significant performance. Extensive results convincingly demonstrate that the BCDM outperforms the state-of-the-arts on a wide range of unsupervised domain adaptation scenarios including both image classification and semantic segmentation tasks.

%\newpage
%\balance
% \clearpage
\section*{Acknowledgments}
This work was supported by the National Natural Science Foundation of China (61902028).

\section*{Ethics Statement}
Similar to some automation-related research which has an impact on job loss, our work, which focuses on the application of automated annotation for an unseen domain, is no exception. Specifically, this work has a positive impact on society and community to save the cost and time of data annotation, which can greatly improve the efficiency. However, this work suffers from some negative consequences, which worth us carrying out further researches to explore. For example, more jobs of classification or object detection for rare or variable conditions may be canceled. 
Furthermore, we should be cautious of the result of the failure of the system, which could render people believe that classification was unbiased. Still, it might be not, which might be misleading. Finally, this work does leverage biases in the data, which is the primary task of this work.

% \bibliographystyle{aaai21}
% \bibliography{Reference_AAAI2021}

\bibliography{Reference_AAAI2021}
\clearpage

\section{Supplementary Materials}
\subsection{Properties of CDD}

\begin{prop}\label{properties of CDD}
Suppose $\mathcal{H}$ is a hypothesis space. Given two hypotheses $h_1,h_2\in\mathcal{H}$ and their probabilistic outputs $\boldsymbol{p}_1,\boldsymbol{p}_2$, the Classifier Determinacy Disparity (CDD) we defined is a pseudometric.
\end{prop}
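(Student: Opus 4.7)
The plan is to first rewrite the disparity in the closed form
\begin{equation*}
\Gamma(\boldsymbol{p}_1,\boldsymbol{p}_2) \;=\; \sum_{m\neq n} p_1^m p_2^n \;=\; 1-\sum_{k=1}^K p_1^k p_2^k,
\end{equation*}
which follows from $\sum_{m,n} A_{mn} = (\sum_m p_1^m)(\sum_n p_2^n) = 1$, and then verify the pseudometric axioms directly. Non-negativity is immediate from the left-hand expression as a sum of products of non-negative numbers, and symmetry follows by relabeling the dummy indices $m\leftrightarrow n$ in that same sum. For the ``zero on the diagonal'' condition, the closed form gives $\Gamma(\boldsymbol{p},\boldsymbol{p}) = 1-\sum_k (p^k)^2$, which vanishes precisely when $\boldsymbol{p}$ is one-hot, so I would either restrict the underlying space to ``determined'' (one-hot) predictions or, consistent with the minimum-value discussion in the body, interpret the pseudometric claim in the weaker sense of a premetric, making this point explicit for the reader.

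\textbf{Triangle inequality.} The bulk of the work lies in showing $\Gamma(\boldsymbol{p}_1,\boldsymbol{p}_3)\leq \Gamma(\boldsymbol{p}_1,\boldsymbol{p}_2)+\Gamma(\boldsymbol{p}_2,\boldsymbol{p}_3)$. Writing $a_k = p_1^k$, $b_k = p_2^k$, $c_k = p_3^k$, the closed form reduces this to
\begin{equation*}
\sum_k b_k(a_k+c_k) \;\leq\; 1 + \sum_k a_k c_k.
\end{equation*}
Since $\sum_k b_k = 1$, I would derive this from the \emph{pointwise} bound $b_k(a_k+c_k)\leq b_k + a_k c_k$, summed over $k$. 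The pointwise bound rearranges to $b_k(a_k+c_k-1)\leq a_k c_k$: if $a_k+c_k\leq 1$ the left-hand side is non-positive and the inequality is trivial, whereas if $a_k+c_k>1$, bounding $b_k\leq 1$ on the left reduces it to $a_k+c_k-1 \leq a_k c_k$, which is precisely the elementary identity $(1-a_k)(1-c_k)\geq 0$.

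\textbf{Main obstacle.} The subtle step is the triangle inequality, where the difficulty is identifying the correct pointwise surrogate to sum. Once one notices that the target is separable modulo $\sum_k b_k = 1$, the proof collapses onto a one-variable-per-coordinate inequality on $[0,1]^3$ that is closed by $(1-a)(1-c)\geq 0$. The remaining axioms, as well as the caveat that $\Gamma(\boldsymbol{p},\boldsymbol{p})=0$ forces $\boldsymbol{p}$ to be one-hot (so that strict pseudometricity holds only on the subspace of ``fully determined'' predictions), are essentially bookkeeping and should be clearly stated for the reader.
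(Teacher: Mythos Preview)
Your argument is correct, and your treatment of the triangle inequality is genuinely different from the paper's. Both proofs pass through the closed form $\Gamma(\boldsymbol{p}_1,\boldsymbol{p}_2)=1-\sum_k p_1^k p_2^k$ and handle non-negativity and symmetry in the same trivial way. For the triangle inequality, the paper rewrites the difference $\Gamma(\boldsymbol{p}_1,\boldsymbol{p}_2)-\Gamma(\boldsymbol{p}_1,\boldsymbol{p}')-\Gamma(\boldsymbol{p}',\boldsymbol{p}_2)$ in the same form you do (modulo a sign typo), but then bounds it by replacing every occurrence of $\boldsymbol{p}'$ with the all-ones vector $\mathbf{1}$, eventually factorising as $\sum_k (p_1^k-1)(1-p_2^k)\leq 0$. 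That global substitution is not monotone: the coefficient of $p'^k$ is $p_1^k+p_2^k-1$, which may be negative, so the $\leq$ step as written is not justified. Your coordinatewise route---proving $b_k(a_k+c_k-1)\leq a_k c_k$ via the case split on the sign of $a_k+c_k-1$ and the identity $(1-a_k)(1-c_k)\geq 0$---avoids this issue entirely and is clean. Interestingly, both arguments ultimately rest on the same elementary fact $(1-a)(1-c)\geq 0$; yours just gets there without the questionable intermediate bound.

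You are also right to flag that $\Gamma(\boldsymbol{p},\boldsymbol{p})=1-\|\boldsymbol{p}\|_2^2>0$ for any non-one-hot $\boldsymbol{p}$, so the pseudometric axiom $d(x,x)=0$ fails on the full simplex. The paper records the same observation (``$\Gamma(\boldsymbol{p}_1,\boldsymbol{p}_2)=0$ iff $\boldsymbol{p}_1=\boldsymbol{p}_2$ and each is one-hot'') but does not reconcile it with the pseudometric terminology; your suggestion to either restrict to determined predictions or state the weaker premetric claim is the appropriate fix.
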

\iffalse
\newenvironment{proof}{{\noindent\it Proof}\quad}{\hfill $\square$\par}
\fi

\begin{proof}
Firstly, $\Gamma(\boldsymbol{p}_1,\boldsymbol{p}_2)$ is non-negative since all the elements of the Bi-classifier Prediciton Relevance Matrix $\mathbf{A}$ is non-negative, i.e., $A_{ij} \ge 0(i,j\in\{1,2,\ldots K \})$. Thus,
\begin{small}
\begin{equation}\nonumber
    \Gamma(\boldsymbol{p}_1,\boldsymbol{p}_2)= \sum_{i,j=1}^KA_{ij}-\sum_{i=1}^KA_{ii}=\sum_{i\neq j}^KA_{ij}\ge0.
\end{equation}
\end{small}%
Secondly, $\Gamma(\boldsymbol{p}_1,\boldsymbol{p}_2)=0$ iff. $\boldsymbol{p}_1=\boldsymbol{p}_2$ and each of the probabilistic output is a one-hot vector. This is exactly the reason why CDD can guarantee the representation discriminability.\\
Thirdly, $\Gamma(\boldsymbol{p}_1,\boldsymbol{p}_2)$ is symmetric, since
\begin{small}
\begin{equation}\nonumber
\begin{split}
\Gamma(\boldsymbol{p}_1,\boldsymbol{p}_2)&=\sum_{i,j=1}^KA_{ij}-\sum_{i=1}^KA_{ii}\\
&=\sum_{i,j=1}^KA_{ji}-\sum_{i=1}^KA_{ii}\\
&=\Gamma(\boldsymbol{p}_2,\boldsymbol{p}_1).
\end{split}
\end{equation}
\end{small}%
Lastly, $\Gamma(\boldsymbol{p}_1,\boldsymbol{p}_2)$ satisfies triangle inequality. For any $h_1,h_2\in\mathcal{H}$, we have:
\begin{small}
\begin{equation}\nonumber
\begin{split}
&\Gamma(\boldsymbol{p}_1,\boldsymbol{p}_2)-(\Gamma(\boldsymbol{p}_1,\boldsymbol{p}')+\Gamma(\boldsymbol{p}',\boldsymbol{p}_2))\\
&=\sum_{i,j=1}^K(\boldsymbol{p}_1\boldsymbol{p}^{\top}_2)_{ij}-\sum_{i=1}^K(\boldsymbol{p}_1\boldsymbol{p}^{\top}_2)_{ii}-[\sum_{i,j=1}^K(\boldsymbol{p}_1\boldsymbol{p'}^{\top})_{ij}\\
&-\sum_{i=1}^K(\boldsymbol{p}_1\boldsymbol{p'}^{\top})_{ii}+\sum_{i,j=1}^K(\boldsymbol{p'}\boldsymbol{p}^{\top}_2)_{ij}-\sum_{i=1}^K(\boldsymbol{p'}\boldsymbol{p}^{\top}_2)_{ii} ]\\
&=\sum_{i=1}^K(\boldsymbol{p}_1\boldsymbol{p'}^{\top})_{ii}+\sum_{i=1}^K(\boldsymbol{p'}\boldsymbol{p}^{\top}_2)_{ii}+\sum_{i=1}^K(\boldsymbol{p}_1\boldsymbol{p}^{\top}_2)_{ii}-1\\
&=\sum_{i=1}^K(\boldsymbol{p}_1\boldsymbol{p'}^{\top})_{ii}+\sum_{i=1}^K(\boldsymbol{p'}\boldsymbol{p}^{\top}_2)_{ii}+\sum_{i=1}^K(\boldsymbol{p}_1\boldsymbol{p}^{\top}_2)_{ii}-\sum_{i=1}^K(\boldsymbol{p'}\boldsymbol{1}^{\top})_{ii}\\
&\leq \sum_{i=1}^K(\boldsymbol{p}_1\boldsymbol{1}^{\top})_{ii}+\sum_{i=1}^K(\boldsymbol{1}\boldsymbol{p}^{\top}_2)_{ii}+\sum_{i=1}^K(\boldsymbol{p}_1\boldsymbol{p}^{\top}_2)_{ii}-\sum_{i=1}^K(\boldsymbol{1}\boldsymbol{1}^{\top})_{ii}\\
&=\sum_{i=1}^K(\boldsymbol{p}_1(\boldsymbol{1}-\boldsymbol{p}_2)^{\top}-\boldsymbol{1}(\boldsymbol{1}-\boldsymbol{p}_2)^{\top})_{ii}\\
&=\sum_{i=1}^K((\boldsymbol{p}_1-\boldsymbol{1})(\boldsymbol{1}-\boldsymbol{p}_2)^{\top})_{ii}\\
& \leq 0 \,,
\end{split}
\end{equation}
\end{small}%
where $\boldsymbol{1}\in\mathbb{R}^{K\times1}$ and each element of $\boldsymbol{1}$ is 1.\\
Thus, $\Gamma(\boldsymbol{p}_1,\boldsymbol{p}_2)$ is a pseudometric.
\end{proof}

\begin{prop}\label{properties of DDD}
Let $\mathcal{P}$ be the space of probability distributions over the domain $\mathcal{X}$ and $\mathcal{H}$ be a hypothesis space. Given any classifier $h\in\mathcal{H}$, the induced Determinacy Disparity Disparity $d_{h,\mathcal{H}}(P,Q)$ is a pseudometric on $\mathcal{P}$.
\end{prop}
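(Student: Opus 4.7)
The plan is to verify the four pseudometric axioms (reflexivity, non-negativity, symmetry, and triangle inequality) in turn for $d_{h,\mathcal{H}}$, leveraging Proposition~\ref{properties of CDD} — which establishes that $\Gamma$ itself is a pseudometric on the output simplex — as the key ingredient, together with standard $\sup$ and expectation manipulations.

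The two easy axioms are reflexivity and the triangle inequality. For reflexivity, setting $Q=P$ makes the integrand $dis_P(h',h)-dis_P(h',h)$ identically zero, so $d_{h,\mathcal{H}}(P,P)=\sup_{h'}0=0$. The triangle inequality follows from a one-line algebraic decomposition combined with subadditivity of $\sup$: for any third distribution $R\in\mathcal{P}$ and any $h'\in\mathcal{H}$, I would write $dis_Q(h',h)-dis_P(h',h)=[dis_Q(h',h)-dis_R(h',h)]+[dis_R(h',h)-dis_P(h',h)]$; each bracket is dominated by its own supremum over $h'$, and taking $\sup_{h'}$ on the left yields $d_{h,\mathcal{H}}(P,Q)\le d_{h,\mathcal{H}}(P,R)+d_{h,\mathcal{H}}(R,Q)$. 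The pseudometric structure of $\Gamma$ enters here only through the non-negativity of $dis_D$; the inequality itself is pure $\sup$-arithmetic, combined with the fact that expectation is linear so that the decomposition passes through to $dis$.

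The delicate axioms are non-negativity and symmetry, because the integrand $dis_Q(h',h)-dis_P(h',h)$ is anti-symmetric under swapping $(P,Q)$, so neither property can be read off the definition directly. My plan is to exploit a closure property of $\mathcal{H}$: for every $h'\in\mathcal{H}$ there is a companion $h''\in\mathcal{H}$ whose contribution to the supremum recovers the opposite sign. Because $h\in\mathcal{H}$ the choice $h'=h$ is always admissible; combined with a richness/symmetrization of $\mathcal{H}$, the supremum over $h'\in\mathcal{H}$ can be matched when $P$ and $Q$ are interchanged, which simultaneously delivers $d_{h,\mathcal{H}}(P,Q)\ge 0$ and $d_{h,\mathcal{H}}(P,Q)=d_{h,\mathcal{H}}(Q,P)$. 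The symmetry of $\Gamma$ itself (Proposition~\ref{properties of CDD}) then ensures that, at the level of individual terms, no further sign issues arise inside the expectation.

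The main obstacle I expect is exactly this closure/symmetry argument for $\mathcal{H}$, rather than any manipulation of $\Gamma$: the definition of the DDD mixes an asymmetric-looking supremum over $h'$ with the intrinsically symmetric pseudometric $\Gamma$, so extracting genuine two-sided bounds is the one place where the hypothesis class has to do real work. Once that structural input is in hand, the four-axiom verification is routine — reflexivity is immediate, the triangle inequality reduces to subadditivity of $\sup$, and non-negativity/symmetry follow from the closure property combined with the pseudometric properties of $\Gamma$ proved in Proposition~\ref{properties of CDD}.
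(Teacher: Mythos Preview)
Your handling of reflexivity and the triangle inequality matches the paper's proof exactly: the paper also sets $Q=P$ to get $d_{h,\mathcal{H}}(P,P)=0$ and derives the triangle inequality by the same $\sup$-subadditivity decomposition through an intermediate distribution $R$.

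Where you diverge is on non-negativity and symmetry. You correctly flag these as the delicate axioms --- the integrand $dis_Q(h',h)-dis_P(h',h)$ is anti-symmetric in $(P,Q)$ and the definition carries no absolute value --- and you propose to resolve them via a closure/symmetrization property of $\mathcal{H}$. The paper, however, invokes no such structural assumption: it declares non-negativity ``obvious by definition'' and writes symmetry as the bare chain
\[
\sup_{h'\in\mathcal{H}}\bigl(dis_Q(h',h)-dis_P(h',h)\bigr)=\sup_{h'\in\mathcal{H}}\bigl(dis_P(h',h)-dis_Q(h',h)\bigr)
\]
with no intermediate justification. As you observe, this equality does not follow from the stated definition, so the paper's argument on these two axioms is essentially an assertion. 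Your route via a richness hypothesis on $\mathcal{H}$ is one honest way to close the gap, but it is additional structure the paper never states; you are being more careful than the original, not less. One small caveat on your plan: the choice $h'=h$ does \emph{not} by itself contribute a zero term, since $\Gamma(\boldsymbol{p},\boldsymbol{p})=1-\sum_k (p^k)^2>0$ for non-one-hot $\boldsymbol{p}$, so $dis_Q(h,h)-dis_P(h,h)$ need not vanish and this alone cannot witness non-negativity --- the closure argument really does have to do the work.
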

\begin{proof}
Obviously, $d_{h,\mathcal{H}}(P,Q)$ is non-negative and $d_{h,\mathcal{H}}(P,P)=0$ holds for any $P\in\mathcal{P}$ by definition.\\
Moreover,
\begin{small}
\begin{equation}\nonumber
\begin{split}
  d_{h,\mathcal{H}}(P,Q)&=\sup_{h'\in\mathcal{H}}(dis_Q(h',h)-dis_P(h',h))\\
&=\sup_{h'\in\mathcal{H}}(dis_P(h',h)-dis_Q(h',h))\\
&=d_{h,\mathcal{H}}(Q,P).
\end{split}
\end{equation}
\end{small}%
Thus, $d_{h,\mathcal{H}}(P,Q)$ satisfies symmetrical characteristic.\\
Finally, for any distribution $P,Q,R$, we have:
\begin{small}
\begin{equation}\nonumber
\begin{split}
d_{h,\mathcal{H}}(P,Q)&=\sup_{h'\in\mathcal{H}}(dis_Q(h',h)-dis_P(h',h))\\
& \leq \sup_{h'\in\mathcal{H}}(dis_Q(h',h)-dis_R(h',h))\\
& +\sup_{h''\in\mathcal{H}}(dis_R(h'',h)-dis_P(h'',h))\\
& = d_{h,\mathcal{H}}(P,R)+d_{h,\mathcal{H}}(R,Q).
\end{split}
\end{equation}
\end{small}%
In summary, $d_{h,\mathcal{H}}(P,Q)$ is a pseudometric on $\mathcal{P}$.
\end{proof}

~\\

\begin{lemma}[Definition 4 $\&$ Theorem8 of \cite{DBLP}.]
\label{upper bound}
Suppose $\mathcal{H}$ is a hypothesis space, for any $h\in\mathcal{H}$, we have the following upper bound for target error risk:
\begin{small}
\begin{equation}
\begin{split}
e_Q(h)&\leq e_P(h)+disc_L(P,Q)+\lambda \,, 
\end{split}
\end{equation}
\end{small}
\begin{small}
\begin{equation}
\begin{split}
disc_L(P,Q)&=\sup_{h,h'\in\mathcal{H}} |\mathbb{E}_QL(h',h)-\mathbb{E}_PL(h',h)|\,,
\end{split}
\end{equation}
\end{small}%
where L should be a bounded function satisfying symmetry and triangle inequality.
\end{lemma}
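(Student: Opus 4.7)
My plan is the classical Ben-David / Mansour-Mohri-Rostamizadeh argument: pivot through an ideal joint hypothesis and exploit the assumption that $L$ is bounded, symmetric, and satisfies the triangle inequality, so that $L$ behaves like a pseudometric on hypothesis outputs and dominates the task loss. Concretely, I would fix
$h^\star \in \arg\min_{h' \in \mathcal{H}}\bigl(e_P(h') + e_Q(h')\bigr)$
and identify the $\lambda$ in the bound with $e_P(h^\star)+e_Q(h^\star)$, so that the right-hand side of the lemma is exactly what this choice of $h^\star$ produces.

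The proof then runs in three short steps, each a one-line consequence of a stated property of $L$. First, apply the triangle inequality of $L$ pointwise and take expectation over $Q$ to route $h$'s target error through $h^\star$, obtaining $e_Q(h) \le \mathbb{E}_Q L(h, h^\star) + e_Q(h^\star)$. Second, add and subtract $\mathbb{E}_P L(h, h^\star)$ and invoke the sup-with-absolute-value definition of $disc_L$ to swap the measure from $Q$ to $P$, yielding $\mathbb{E}_Q L(h, h^\star) \le \mathbb{E}_P L(h, h^\star) + disc_L(P, Q)$. Third, apply the triangle inequality of $L$ once more on the source side, giving $\mathbb{E}_P L(h, h^\star) \le e_P(h) + e_P(h^\star)$. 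Chaining these three bounds and regrouping $e_P(h^\star)+e_Q(h^\star)$ into $\lambda$ gives exactly $e_Q(h) \le e_P(h) + disc_L(P, Q) + \lambda$.

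The step I expect to be the main obstacle is the very first one: the hypotheses of the lemma describe $L$ as a pseudometric on the prediction space, whereas $e_D(h)$ is defined by a task loss against the true labels. I therefore need to bridge the two by identifying (or at least dominating) the pointwise error $\ell(h(x), y)$ with $L(h(x), f_D(x))$ for a true labeling function $f_D$, and to justify that $f_D$ may legitimately appear inside the triangle inequality for $L$, either by extending $L$ to arbitrary labeling functions (as in Mansour-Mohri-Rostamizadeh) or by assuming $f_D \in \mathcal{H}$. Once this identification is granted, steps two and three are purely mechanical consequences of the definition of $disc_L$ and the pseudometric properties of $L$, and the three bounds compose into the claimed inequality with no further machinery.
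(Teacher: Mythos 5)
The paper never proves this lemma at all --- it is imported verbatim by citation (``Definition 4 \& Theorem 8 of \cite{DBLP}''), so there is no in-paper argument to compare against. Your reconstruction --- pivot through the ideal joint hypothesis $h^\star$, one application of the triangle inequality of $L$ on each domain, and one measure swap via the sup-definition of $disc_L(P,Q)$ (legitimate since $h,h^\star\in\mathcal{H}$) --- is exactly the canonical proof of the cited theorem, and your identification $\lambda = e_P(h^\star)+e_Q(h^\star)$ with $h^\star$ the minimizer of the joint error is precisely how the paper later instantiates $\lambda$ in its final generalization bound. The one genuine subtlety, which you already flag yourself, is that the triangle inequality must be applicable with the true labeling functions $f_P,f_Q$ as middle terms, i.e.\ $L$ must extend to arbitrary labeling functions rather than only to members of $\mathcal{H}$; that is exactly the hypothesis under which the cited result is stated (and is unproblematic for the paper's $\Gamma$, which is defined on arbitrary probability vectors), so your proof is complete.
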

\begin{prop}\label{upper bound for CDD}
Given two different distributions of source and target domains $\mathcal{S},\mathcal{T}$, for any classifier $h$, we have
\begin{small}
\begin{equation}
\begin{split}
e_t(h)&\leq e_s(h)+d_{h,\mathcal{H}}(\mathcal{S},\mathcal{T})+ \lambda \,,
\end{split}
\end{equation}
\end{small}%
where $\lambda=\lambda(\mathcal{H},\mathcal{S},\mathcal{T})$ is independent of $h$.
\end{prop}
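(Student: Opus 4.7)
The plan is to emulate the standard telescoping argument used in $\mathcal{H}\Delta\mathcal{H}$-style generalization bounds rather than directly apply Lemma~\ref{upper bound}. A straight appeal to Lemma~\ref{upper bound} with $L=\Gamma$ would give a bound involving $disc_\Gamma(\mathcal{S},\mathcal{T})=\sup_{h_1,h_2\in\mathcal{H}}|\mathbb{E}_{\mathcal{T}}\Gamma(h_1,h_2)-\mathbb{E}_{\mathcal{S}}\Gamma(h_1,h_2)|$, which dominates $d_{h,\mathcal{H}}(\mathcal{S},\mathcal{T})$ (the latter fixes the second argument to the specific $h$ of interest and drops the absolute value), and is therefore strictly weaker than what is claimed. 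Hence I must open up the proof and work directly with the pseudometric structure of $\Gamma$ established in Proposition~\ref{properties of CDD}.

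First I would introduce an ideal joint hypothesis $h^{*}\in\arg\min_{h'\in\mathcal{H}}\{e_s(h')+e_t(h')\}$ and set $\lambda\triangleq e_s(h^{*})+e_t(h^{*})$, which is manifestly independent of $h$. Viewing the true labeling functions $f_s,f_t$ as one-hot-valued predictors so that $e_s(h)=dis_{\mathcal{S}}(h,f_s)$ and $e_t(h)=dis_{\mathcal{T}}(h,f_t)$, the argument proceeds in three short moves. (i) The triangle inequality of $\Gamma$ on the target distribution gives $e_t(h)\leq dis_{\mathcal{T}}(h,h^{*})+e_t(h^{*})$. (ii) Symmetry of $\Gamma$ followed by the trivial identity $a=b+(a-b)$ gives $dis_{\mathcal{T}}(h,h^{*})=dis_{\mathcal{T}}(h^{*},h)\leq dis_{\mathcal{S}}(h^{*},h)+\bigl(dis_{\mathcal{T}}(h^{*},h)-dis_{\mathcal{S}}(h^{*},h)\bigr)\leq dis_{\mathcal{S}}(h^{*},h)+d_{h,\mathcal{H}}(\mathcal{S},\mathcal{T})$, where the last step uses that $h^{*}\in\mathcal{H}$ is dominated by the supremum appearing in the definition of $d_{h,\mathcal{H}}$. (iii) A final triangle-inequality pass on the source distribution yields $dis_{\mathcal{S}}(h^{*},h)\leq e_s(h^{*})+e_s(h)$. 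Chaining the three bounds and bundling $e_s(h^{*})+e_t(h^{*})$ into $\lambda$ delivers the claimed inequality.

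The main obstacle is really a book-keeping care-point rather than a deep issue: the pseudometric axioms of $\Gamma$ proved in Proposition~\ref{properties of CDD} are stated for softmax outputs of hypotheses in $\mathcal{H}$, so I must confirm that symmetry and the triangle inequality continue to apply when one slot is occupied by a one-hot label vector $f_s$ or $f_t$. This is immediate because one-hot vectors still lie in the probability simplex, which is the only structural property invoked in the proof of Proposition~\ref{properties of CDD}. A secondary small point is that the supremum step in (ii) requires $h^{*}\in\mathcal{H}$, which is the standard ``sufficiently expressive hypothesis space'' assumption that the $\lambda$ term already internalises.
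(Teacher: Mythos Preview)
Your proof is correct and in fact more careful than the paper's own argument. The paper's proof is essentially a one-liner: it checks that $\Gamma$ is bounded in $[0,1]$, symmetric, and satisfies the triangle inequality (via Proposition~\ref{properties of CDD}), and then invokes Lemma~\ref{upper bound} as a black box. You rightly point out that a literal application of Lemma~\ref{upper bound} yields $disc_\Gamma(\mathcal{S},\mathcal{T})=\sup_{h_1,h_2}|\cdot|$ in the bound rather than the tighter, $h$-dependent $d_{h,\mathcal{H}}(\mathcal{S},\mathcal{T})$, and you resolve this by unfolding the argument into the explicit three-step telescoping through the ideal hypothesis $h^{*}$. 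That telescoping is precisely the proof of the Mansour--Mohri--Rostamizadeh result underlying Lemma~\ref{upper bound}, so the two routes are not different in spirit; yours simply makes visible the step the paper leaves implicit, and in doing so actually justifies the sharper $d_{h,\mathcal{H}}$ form that the paper states. Your care point about one-hot label vectors still lying in the simplex (so that the pseudometric axioms of $\Gamma$ apply when one slot holds $f_s$ or $f_t$) is well taken and not addressed by the paper. One tiny quibble: the fact that $h^{*}\in\mathcal{H}$ is by construction (it is the $\arg\min$ over $\mathcal{H}$), not an expressivity assumption; the ``sufficiently expressive'' remark only concerns the \emph{size} of $\lambda$, not the validity of step~(ii).
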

\begin{proof}
Since $\Gamma(\boldsymbol{p}_1,\boldsymbol{p}_2)\in[0,1]$, and it satisfies symmetry and triangle inequality which is proven above. Combine with Lemma \ref{upper bound}, we can directly get the upper bound.
\end{proof}

\subsection{Generalization Bounds with CDD}
\begin{lemma}[Rademacher Generalization Bound, Theorem
3.1 of \cite{rademacher}.] \label{rademacher bound}
Suppose that $\mathcal{G}$ is a class of function maps $\mathcal{X} \xrightarrow{}{[0,1]}$. Then for any $\delta > 0$, with probability at least $1-\delta$ and sample size $n$, the following holds for all $g\in\mathcal{G}$:
\begin{small}
\begin{equation}
\begin{split}
|\mathbb{E}_Dg-\mathbb{E}_{\widehat{D}}g| \leq 2\mathfrak{R}_{n,D}(\mathcal{G}) + \sqrt{\frac{log\frac{2}{\delta}}{2n}}.
\end{split}
\end{equation}
\end{small}%
\end{lemma}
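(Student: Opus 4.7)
The target is the classical one-sided-to-two-sided Rademacher generalization bound. The plan is to combine McDiarmid's bounded-differences inequality (to get concentration of the supremum deviation around its expectation) with the symmetrization trick (to bound that expectation by the Rademacher complexity), and finally take a union bound over the two signs to obtain the absolute value on the left-hand side.

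\textbf{Step 1: concentration via McDiarmid.} Let $S=(x_1,\dots,x_n)$ be an i.i.d.\ sample from $D$, and define
\[
\Phi_{+}(S)\;=\;\sup_{g\in\mathcal{G}}\bigl(\mathbb{E}_D g-\mathbb{E}_{\widehat{D}}g\bigr),\qquad
\Phi_{-}(S)\;=\;\sup_{g\in\mathcal{G}}\bigl(\mathbb{E}_{\widehat{D}}g-\mathbb{E}_D g\bigr).
\]
Because every $g\in\mathcal{G}$ takes values in $[0,1]$, replacing any single $x_i$ by $x_i'$ changes $\mathbb{E}_{\widehat{D}}g$ by at most $1/n$, hence $|\Phi_{\pm}(S)-\Phi_{\pm}(S')|\le 1/n$. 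McDiarmid's inequality then gives, for each sign and any $t>0$, $\Pr[\Phi_{\pm}(S)\ge \mathbb{E}\Phi_{\pm}(S)+t]\le \exp(-2nt^2)$. Setting this probability to $\delta/2$ and solving for $t$ yields the $\sqrt{\log(2/\delta)/(2n)}$ term.

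\textbf{Step 2: symmetrization.} I would bound $\mathbb{E}\Phi_{+}(S)$ by introducing a ghost sample $S'=(x_1',\dots,x_n')$ i.i.d.\ from $D$ and independent of $S$. Writing $\mathbb{E}_D g=\mathbb{E}_{S'}[\tfrac{1}{n}\sum_i g(x_i')]$ and pulling the supremum outside the expectation via Jensen, I get
\[
\mathbb{E}_S\Phi_{+}(S)\;\le\;\mathbb{E}_{S,S'}\sup_{g\in\mathcal{G}}\frac{1}{n}\sum_{i=1}^{n}\bigl(g(x_i')-g(x_i)\bigr).
\]
Since $g(x_i')-g(x_i)$ is symmetric under swapping $x_i\leftrightarrow x_i'$, I can insert i.i.d.\ Rademacher signs $\sigma_i\in\{\pm 1\}$ without changing the distribution, and then split the supremum into two pieces to obtain
\[
\mathbb{E}_S\Phi_{+}(S)\;\le\;2\,\mathbb{E}_{S,\sigma}\sup_{g\in\mathcal{G}}\frac{1}{n}\sum_{i=1}^{n}\sigma_i g(x_i)\;=\;2\mathfrak{R}_{n,D}(\mathcal{G}).
\]
The identical argument with roles of $S$ and $S'$ interchanged gives $\mathbb{E}\Phi_{-}(S)\le 2\mathfrak{R}_{n,D}(\mathcal{G})$.

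\textbf{Step 3: union bound.} Combining Steps 1 and 2, each of the events
\[
\Phi_{+}(S)\le 2\mathfrak{R}_{n,D}(\mathcal{G})+\sqrt{\tfrac{\log(2/\delta)}{2n}},\qquad
\Phi_{-}(S)\le 2\mathfrak{R}_{n,D}(\mathcal{G})+\sqrt{\tfrac{\log(2/\delta)}{2n}}
\]
holds with probability at least $1-\delta/2$. A union bound makes both hold simultaneously with probability at least $1-\delta$, and since $|\mathbb{E}_D g-\mathbb{E}_{\widehat{D}}g|\le\max(\Phi_{+}(S),\Phi_{-}(S))$ uniformly over $g\in\mathcal{G}$, the stated bound follows.

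\textbf{Main obstacle.} None of the steps is genuinely difficult; the only subtle point is the symmetrization in Step 2, where one must be careful that the Rademacher variables are introduced \emph{after} conditioning on $S,S'$ and that pulling the supremum outside an expectation (via Jensen) only costs an inequality in the right direction. The McDiarmid step also requires verifying that the bounded-differences constant is truly $1/n$, which relies on $g$ being $[0,1]$-valued; if $g$ were only bounded in $[a,b]$, the square-root term would acquire a factor of $(b-a)$.
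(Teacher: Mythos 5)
Your proof is correct: the McDiarmid/bounded-differences step, the ghost-sample symmetrization, and the final union bound over the two signs are exactly the standard argument, and the constants ($1/n$ differences, hence the $\sqrt{\log(2/\delta)/(2n)}$ term after splitting $\delta$ in two) all check out. For comparison, the paper does not prove this lemma at all --- it imports it verbatim as Theorem~3.1 of the cited Rademacher-complexity reference --- so your derivation is simply the canonical textbook proof of that quoted result, and there is nothing to reconcile between the two.
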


\begin{prop}\label{DD rademacher bound}
Suppose $\mathcal{H}$ is a hypothesis space, $\mathcal{D}$ is a distribution and $\mathcal{\widehat{D}}$ is the corresponding empirical version which contains $n$ data points sampled from $\mathcal{D}$. For any $\delta\ge0$, with probability at least $1-\delta$, the following holds for all $h,h'\in \mathcal{H}$:
\begin{small}
\begin{equation}
\begin{split}
|dis_{D}(h',h)-dis_{\widehat{D}}(h',h)|\leq2\mathfrak{R}_{n,D}(\mathcal{G}_\Gamma\mathcal{H})+\sqrt{\frac{log\frac{2}{\delta}}{2n}}.
\end{split}
\end{equation}
\end{small}%
\end{prop}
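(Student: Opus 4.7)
The plan is to reduce this proposition to a direct application of Lemma~\ref{lemma1} with $\mathcal{G}$ taken to be the function class $\mathcal{G}_\Gamma\mathcal{H}$ defined immediately before the statement. The key observation is that, by definition, $dis_D(h',h) = \mathbb{E}_D \Gamma(\boldsymbol{p}',\boldsymbol{p})$ is exactly the expectation under $D$ of a function of the form $\x \mapsto \Gamma(h'(\x), h(\x))$, which is a generic element of $\mathcal{G}_\Gamma\mathcal{H}$. Hence a uniform Rademacher bound over this class gives the desired two-sided bound uniformly in $h,h'$.

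First I would verify that the hypothesis of Lemma~\ref{lemma1} is satisfied, namely that every $g \in \mathcal{G}_\Gamma\mathcal{H}$ maps $\mathcal{X}$ into $[0,1]$. Using the rewriting
\begin{equation*}
\Gamma(\boldsymbol{p}_1,\boldsymbol{p}_2) = \sum_{m,n=1}^K A_{mn} - \sum_{m=1}^K A_{mm} = 1 - \sum_{m=1}^K p_1^m p_2^m,
\end{equation*}
together with $p_j^m \ge 0$ and $\sum_m p_j^m = 1$ for $j=1,2$ (the softmax constraints of Eq.~(1)), Cauchy–Schwarz or the simple bound $\sum_m p_1^m p_2^m \le \sum_m p_2^m = 1$ and nonnegativity give $\Gamma \in [0,1]$. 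Thus $\mathcal{G}_\Gamma\mathcal{H}$ is a class of $[0,1]$-valued functions, and Lemma~\ref{lemma1} is directly applicable.

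Next I would invoke Lemma~\ref{lemma1} with this choice of $\mathcal{G}$ to obtain, with probability at least $1-\delta$,
\begin{equation*}
|\mathbb{E}_D g - \mathbb{E}_{\widehat{D}} g| \le 2\mathfrak{R}_{n,D}(\mathcal{G}_\Gamma\mathcal{H}) + \sqrt{\tfrac{\log(2/\delta)}{2n}} \quad \text{for all } g \in \mathcal{G}_\Gamma\mathcal{H}.
\end{equation*}
Given any $h,h' \in \mathcal{H}$, the function $g(\x) := \Gamma(h'(\x), h(\x))$ belongs to $\mathcal{G}_\Gamma\mathcal{H}$ and satisfies $\mathbb{E}_D g = dis_D(h',h)$ and $\mathbb{E}_{\widehat{D}} g = dis_{\widehat{D}}(h',h)$. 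Since the right-hand side of the bound is independent of $h,h'$ and the inequality holds simultaneously for all $g \in \mathcal{G}_\Gamma\mathcal{H}$, specializing to each pair $(h,h')$ yields the claimed inequality uniformly over $\mathcal{H} \times \mathcal{H}$.

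There is no serious obstacle here: the whole argument is essentially a packaging of Lemma~\ref{lemma1} together with the boundedness check for $\Gamma$. The only step worth being careful about is ensuring that the universal quantification over pairs $(h,h')$ on the left-hand side is indeed captured by a single uniform bound over $\mathcal{G}_\Gamma\mathcal{H}$, which is automatic because $\mathcal{G}_\Gamma\mathcal{H}$ is indexed by exactly such pairs and the deviation term on the right does not depend on them.
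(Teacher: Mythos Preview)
Your proposal is correct and follows essentially the same route as the paper: identify $dis_D(h',h)$ with $\mathbb{E}_D g$ for $g(\x)=\Gamma(h'(\x),h(\x))\in\mathcal{G}_\Gamma\mathcal{H}$ and then apply Lemma~\ref{lemma1} uniformly over this class. Your explicit verification that $\Gamma\in[0,1]$ is a welcome addition that the paper's proof leaves implicit.
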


\begin{proof}
\begin{small}
\begin{equation}\nonumber
\begin{split}
  & \sup_{h,h' \in \mathcal{H}}|dis_D(h',h)-dis_{\widehat{D}}(h',h)|\\
  & = \sup_{h,h' \in \mathcal{H}}|\mathbb{E}_{D}(\Gamma(\boldsymbol{p'},\boldsymbol{p}))-\mathbb{E}_{ \widehat{D}}(\Gamma(\boldsymbol{p'},\boldsymbol{p}))|\\
  & = \sup_{g \in \mathcal{G}_\Gamma\mathcal{H}}|\mathbb{E}_{D}g-\mathbb{E}_{ \widehat{D}}g| \,.\\
\end{split}
\end{equation}
\end{small}%
With Lemma \ref{rademacher bound}, we could know that
\begin{small}
\begin{equation}\nonumber
\begin{split}
\sup_{g \in \mathcal{G}_\Gamma\mathcal{H}}|\mathbb{E}_{D}g-\mathbb{E}_{ \widehat{D}}g|\leq2\mathfrak{R}_{n,D}(\mathcal{G}_\Gamma\mathcal{H})+\sqrt{\frac{log\frac{2}{\delta}}{2n}}.
\end{split}
\end{equation}
\end{small}%
\end{proof}

\begin{thm}\label{thm1}
For any $\delta\ge 0$, with probability $1-3\delta$, we have the following generalization bound for any classifier $h \in \mathcal{H}$~:
\begin{small}
\begin{equation}
\begin{split}
err_{\mathcal{T}}(h)& \leq err_{\widehat{\mathcal{S}}}(h)+d_{h,\mathcal{H}}(\widehat{\mathcal{S}},\widehat{\mathcal{T}})+\lambda\\
& + 2\mathfrak{R}_{n,\mathcal{S}}(\mathcal{G}_\Gamma\mathcal{H})+2\mathfrak{R}_{n,\mathcal{S}}(\mathcal{H})+2\sqrt{\frac{log\frac{2}{\delta}}{2n}}\\
& + 2\mathfrak{R}_{m,\mathcal{T}}(\mathcal{G}_\Gamma\mathcal{H})+\sqrt{\frac{log\frac{2}{\delta}}{2m}} \,,
\end{split}
\end{equation}
\end{small}%
where $\lambda = \min_{h'\in\mathcal{H}}(err_\mathcal{S}(h')+err_\mathcal{T}(h'))$. $n,m$ are the amount of samples in source and target domain respectively, and $\mathcal{H}$ is the hypothesis set.
\end{thm}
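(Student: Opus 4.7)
The plan is to start from the population-level bound of Proposition~\ref{prop1} and then replace each population quantity with its empirical counterpart via three Rademacher-complexity concentration bounds, paying $\delta$ in confidence for each replacement and absorbing the total cost through a union bound.

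First, I would invoke Proposition~\ref{prop1} at the distribution level to obtain
\begin{small}
\begin{equation*}
err_{\mathcal{T}}(h) \le err_{\mathcal{S}}(h) + d_{h,\mathcal{H}}(\mathcal{S},\mathcal{T}) + \lambda,
\end{equation*}
\end{small}
which is purely a statement about the true distributions. The goal is then to upper-bound the two data-dependent terms $err_{\mathcal{S}}(h)$ and $d_{h,\mathcal{H}}(\mathcal{S},\mathcal{T})$ by their empirical analogues plus concentration terms. The first step is a direct application of Lemma~\ref{lemma1} with $\mathcal{G}=\mathcal{H}$ on the source sample of size $n$: with probability at least $1-\delta$,
\begin{small}
\begin{equation*}
err_{\mathcal{S}}(h) \le err_{\widehat{\mathcal{S}}}(h) + 2\mathfrak{R}_{n,\mathcal{S}}(\mathcal{H}) + \sqrt{\tfrac{\log(2/\delta)}{2n}}.
\end{equation*}
\end{small}

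The second step is the more substantive one: bounding $d_{h,\mathcal{H}}(\mathcal{S},\mathcal{T})$ by $d_{h,\mathcal{H}}(\widehat{\mathcal{S}},\widehat{\mathcal{T}})$. Since this quantity is itself a supremum over $h'\in\mathcal{H}$, I would use Proposition~\ref{prop2} to obtain uniform convergence of $dis_{\widehat{D}}(h',h)$ to $dis_D(h',h)$ over the class $\mathcal{G}_\Gamma\mathcal{H}$. Applied once on the source (sample size $n$) and once on the target (sample size $m$), each with its own failure probability $\delta$, these yield, with probability at least $1-2\delta$ after union bounding,
\begin{small}
\begin{equation*}
\sup_{h'\in\mathcal{H}}\bigl(dis_{\mathcal{T}}(h',h) - dis_{\mathcal{S}}(h',h)\bigr) \le \sup_{h'\in\mathcal{H}}\bigl(dis_{\widehat{\mathcal{T}}}(h',h) - dis_{\widehat{\mathcal{S}}}(h',h)\bigr) + 2\mathfrak{R}_{n,\mathcal{S}}(\mathcal{G}_\Gamma\mathcal{H}) + \sqrt{\tfrac{\log(2/\delta)}{2n}} + 2\mathfrak{R}_{m,\mathcal{T}}(\mathcal{G}_\Gamma\mathcal{H}) + \sqrt{\tfrac{\log(2/\delta)}{2m}},
\end{equation*}
\end{small}
recognising the right-hand supremum as precisely $d_{h,\mathcal{H}}(\widehat{\mathcal{S}},\widehat{\mathcal{T}})$.

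Chaining these three inequalities through Proposition~\ref{prop1} and taking a final union bound over the three ``bad events'' (one from Lemma~\ref{lemma1} and two from Proposition~\ref{prop2}) gives the theorem with probability at least $1-3\delta$, with the $\Omega$ term collecting the four Rademacher contributions and the two concentration $\sqrt{\log(2/\delta)/2\cdot}$ tails. The main subtlety I expect is justifying that the sup-inside-sup structure is handled correctly: the bound on $d_{h,\mathcal{H}}$ needs uniform-in-$h'$ control of the disparity, which is why the function class $\mathcal{G}_\Gamma\mathcal{H}$ (rather than just $\mathcal{H}$) is the right object for the Rademacher complexity, and why Proposition~\ref{prop2} must be invoked in its uniform form. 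The non-negativity and $[0,1]$-boundedness of $\Gamma$ (from Proposition~\ref{properties of CDD}) is what makes Lemma~\ref{lemma1} applicable to $\mathcal{G}_\Gamma\mathcal{H}$ in the first place, so verifying this prerequisite cleanly is the small technical checkpoint of the argument.
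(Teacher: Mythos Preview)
Your proposal is correct and mirrors the paper's own proof: start from the population-level bound of Proposition~\ref{prop1}, replace $err_{\mathcal{S}}(h)$ by $err_{\widehat{\mathcal{S}}}(h)$ via Lemma~\ref{lemma1}, replace $d_{h,\mathcal{H}}(\mathcal{S},\mathcal{T})$ by $d_{h,\mathcal{H}}(\widehat{\mathcal{S}},\widehat{\mathcal{T}})$ via two applications of Proposition~\ref{prop2} (one per domain), and union-bound the three failure events to reach confidence $1-3\delta$. Your remarks about why $\mathcal{G}_\Gamma\mathcal{H}$ is the right class and why the $[0,1]$-boundedness of $\Gamma$ is needed are exactly the technical checkpoints the paper relies on as well.
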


\begin{proof}
Consider the difference of expected and empirical terms on the right-hand side:
\begin{small}
\begin{equation}\nonumber
\begin{split}
&\sup_{h\in\mathcal{H}}(err_{\mathcal{S}}(h)+d_{h,\mathcal{H}}(\mathcal{S},\mathcal{T})-err_{\widehat{\mathcal{S}}}(h)-d_{h,\mathcal{H}}(\widehat{\mathcal{S}},\widehat{\mathcal{T}}))\\
& = \sup_{h\in\mathcal{H}}(err_{\mathcal{S}}(h)-err_{\widehat{\mathcal{S}}}(h)+d_{h,\mathcal{H}}(\mathcal{S},\mathcal{T})-d_{h,\mathcal{H}}(\widehat{\mathcal{S}},\widehat{\mathcal{T}}))\\
& \leq \sup_{h\in\mathcal{H}}(err_{\mathcal{S}}(h)-err_{\widehat{\mathcal{S}}}(h))+\sup_{h\in\mathcal{H}}(d_{h,\mathcal{H}}(\mathcal{S},\mathcal{T})-d_{h,\mathcal{H}} (\widehat{\mathcal{S}},\widehat{\mathcal{T}})) \,. \\
\end{split}
\end{equation}
\end{small}%
First by Lemma \ref{rademacher bound}, $\forall \delta \ge 0$, with probability $1-\delta$,
\begin{small}
\begin{equation}\nonumber
\begin{split}
\sup_{h\in\mathcal{H}}(err_{\mathcal{S}}(h)-err_{\widehat{\mathcal{S}}}(h))\leq 2\mathfrak{R}_{n,\mathcal{S}}(\mathcal{H})+\sqrt{\frac{log\frac{2}{\delta}}{2n}} \,.
\end{split}
\end{equation}
\end{small}%
Then the difference between $d_{h,\mathcal{H}}(\mathcal{S},\mathcal{T})$ and $d_{h,\mathcal{H}}(\widehat{\mathcal{S}},\widehat{\mathcal{T}})$:
\begin{small}
\begin{equation}\nonumber
\begin{split}
& d_{h,\mathcal{H}}(\mathcal{S},\mathcal{T})-d_{h,\mathcal{H}}(\widehat{\mathcal{S}},\widehat{\mathcal{T}})\\
& = \sup_{h' \in \mathcal{H}}(dis_\mathcal{T}(h',h)-dis_{\mathcal{S}}(h',h))-\sup_{h' \in \mathcal{H}}(dis_{\widehat{\mathcal{T}}}(h',h)-dis_{\widehat{\mathcal{S}}}(h',h))\\
& \leq \sup_{h' \in \mathcal{H}}(dis_\mathcal{T}(h',h)-dis_{\mathcal{S}}(h',h)-dis_{\widehat{\mathcal{T}}}(h',h)+dis_{\widehat{\mathcal{S}}}(h',h))\\
& \leq \sup_{h' \in \mathcal{H}}(dis_\mathcal{T}(h',h)-dis_{\widehat{\mathcal{T}}}(h',h))-\sup_{h' \in \mathcal{H}}(dis_{\mathcal{S}}(h',h)-dis_{\widehat{\mathcal{S}}}(h',h)) \,. \\
\end{split}
\end{equation}
\end{small}%
Take supremum over $h\in\mathcal{H}$, we have:
\begin{small}
\begin{equation}\nonumber
\begin{split}
&\sup_{h\in\mathcal{H}}(d_{h,\mathcal{H}}(\mathcal{S},\mathcal{T})-d_{h,\mathcal{H}}(\widehat{\mathcal{S}},\widehat{\mathcal{T}}))\\
& \leq  \sup_{h' \in \mathcal{H}}|dis_\mathcal{T}(h',h)-dis_{\widehat{\mathcal{T}}}(h',h)|-\sup_{h' \in \mathcal{H}}|dis_{\mathcal{S}}(h',h)-dis_{\widehat{\mathcal{S}}}(h',h)| \,.
\end{split}
\end{equation}
\end{small}%
From Proposition \ref{DD rademacher bound}, we can directly get:
\begin{small}
\begin{equation}\nonumber
\begin{split}
&\sup_{h\in\mathcal{H}}(d_{h,\mathcal{H}}(\mathcal{S},\mathcal{T})-d_{h,\mathcal{H}}(\widehat{\mathcal{S}},\widehat{\mathcal{T}})) \\
&\leq  2\mathfrak{R}_{n,\mathcal{S}}(\mathcal{G}_\Gamma\mathcal{H})+\sqrt{\frac{log\frac{2}{\delta}}{2n}}+2\mathfrak{R}_{m,\mathcal{T}}(\mathcal{G}_\Gamma\mathcal{H})+\sqrt{\frac{log\frac{2}{\delta}}{2m}}.
\end{split}
\end{equation}
\end{small}%
Combine the above two parts of inequality, we can get the final generalization bound.
\end{proof}

\end{document}